\newcommand{\vecstyle}[1]{\mathbf{#1}}
\newcommand{\indicator}[1]{\ensuremath{\mathds{1}\{#1\}}}
\newcommand{\normdist}{\ensuremath{\mathcal{N}}}
\newcommand{\KLD}{\ensuremath{D_{\text{KL}}}}
\newcommand{\expect}{\mathbb{E}}
\newcommand{\Reals}{\ensuremath{\mathds{R}}}
\newcommand{\ic}[1]{\lstinline[basicstyle=\fontsize{8pt}{8.25pt}\selectfont\ttfamily]{#1}}
\title{Deep Amortized Inference for Probabilistic Programs}
\author{
  Daniel Ritchie\\
  Stanford University
  \And
  Paul Horsfall\\
  Stanford University
  \And
  Noah D. Goodman\\
  Stanford University
}
\begin{document}


\newcommand{\latentVar}{\ensuremath{x}}
\newcommand{\observedVar}{\ensuremath{y}}
\newcommand{\latentVars}{\ensuremath{\vecstyle{\latentVar}}}
\newcommand{\observedVars}{\ensuremath{\vecstyle{\observedVar}}}


\newcommand{\trueJoint}{\ensuremath{p(\latentVars, \observedVars)}}
\newcommand{\trueJointPhi}{\ensuremath{p(\latentVars, \observedVars ; \phi)}}
\newcommand{\trueJointTheta}{\ensuremath{p(\latentVars, \observedVars; \theta)}}

\newcommand{\truePosterior}{\ensuremath{p(\latentVars | \observedVars)}}
\newcommand{\truePosteriorPhi}{\ensuremath{p(\latentVars | \observedVars ; \phi)}}
\newcommand{\truePosteriorTheta}{\ensuremath{p(\latentVars | \observedVars ; \theta)}}

\newcommand{\dataMarginal}{\ensuremath{p(\observedVars)}}
\newcommand{\dataMarginalPhi}{\ensuremath{p(\observedVars ; \phi)}}
\newcommand{\dataMarginalTheta}{\ensuremath{p(\observedVars ; \theta)}}


\newcommand{\guide}{\ensuremath{q}}

\newcommand{\guidePosterior}{\ensuremath{\guide(\latentVars | \observedVars ; \phi)}}
\newcommand{\guidePostNoPhi}{\ensuremath{\guide(\latentVars | \observedVars)}}

\newcommand{\elbo}{\ensuremath{\mathcal{L}(\observedVars)}}
\newcommand{\elboPhi}{\ensuremath{\mathcal{L}(\observedVars, \phi)}}
\newcommand{\elboPhiTheta}{\ensuremath{\mathcal{L}(\observedVars, \phi, \theta)}}

\newcommand{\elboDef}{\ensuremath{\expect_{\guide}[ \log \trueJoint - \log \guidePosterior ]}}
\newcommand{\elboDefGenPhi}{\ensuremath{\expect_{\guide}[ \log \trueJointPhi - \log \guidePosterior ]}}
\newcommand{\elboDefGenTheta}{\ensuremath{\expect_{\guide}[ \log \trueJointTheta - \log \guidePosterior ]}}

\newcommand{\gradparams}{\ensuremath{\nabla_\phi}}
\newcommand{\gradparamsTheta}{\ensuremath{\nabla_\theta}}


\newcommand{\reparamVar}{\ensuremath{\epsilon}}
\newcommand{\reparamVars}{\ensuremath{\bm{\epsilon}}}
\newcommand{\reparamDist}{\ensuremath{r}}
\newcommand{\reparamXform}{\ensuremath{g}}
\newcommand{\xformedVar}{\ensuremath{\reparamXform(\reparamVar)}}
\newcommand{\xformedVarPhi}{\ensuremath{\reparamXform(\reparamVar ; \phi)}}
\newcommand{\xformedVars}{\ensuremath{\reparamXform(\reparamVars)}}
\newcommand{\xformedVarsPhi}{\ensuremath{\reparamXform(\reparamVars ; \phi)}}

\maketitle

\begin{abstract}

Probabilistic programming languages (PPLs) are a powerful modeling tool, able to represent any computable probability distribution.
Unfortunately, probabilistic program inference is often intractable, and existing PPLs mostly rely on expensive, approximate sampling-based methods.
To alleviate this problem, one could try to learn from past inferences, so that future inferences run faster.
This strategy is known as \emph{amortized inference}; it has recently been applied to Bayesian networks~\cite{StochasticInverses,NeuralStochasticInverses} and deep generative models~\cite{NVIL,AEVB,DLGM}.
This paper proposes a system for amortized inference in PPLs.
In our system, amortization comes in the form of a parameterized \emph{guide program}.
Guide programs have similar structure to the original program, but can have richer data flow, including neural network components.
These networks can be optimized so that the guide approximately samples from the posterior distribution defined by the original program.
We present a flexible interface for defining guide programs and a stochastic gradient-based scheme for optimizing guide parameters, as well as some preliminary results on automatically deriving guide programs.
We explore in detail the common machine learning pattern in which a `local' model is specified by `global' random values and used to generate independent observed data points; this gives rise to amortized local inference supporting global model learning.
%

\end{abstract}

\section{Introduction}
\label{sec:introduction}


Probabilistic models provide a framework for describing abstract prior knowledge and using it to reason under uncertainty.
Probabilistic programs are a powerful tool for probabilistic modeling. A probabilistic programming language (PPL) is a deterministic programming language augmented with random sampling and Bayesian conditioning operators. 
Performing inference on these programs then involves reasoning about the space of executions which satisfy some constraints, such as observed values. 
A universal PPL, one built on a Turing-complete language, can represent any computable probability distribution, including open-world models, Bayesian non-parameterics, and stochastic recursion~\cite{Church,Venture,Anglican}.


If we consider a probabilistic program to define a distribution $p(\latentVars, \observedVars)$, where $\latentVars$ are (latent) intermediate variable and $\observedVars$ are (observed) output data, then sampling from this distribution is easy: just run the program forward. However, computing the posterior distribution $p(\latentVars | \observedVars)$ is hard, involving an intractable integral. Typically, PPLs provide means to approximate the posterior using Monte Carlo methods (e.g. MCMC, SMC), dynamic programming, or analytic computation.


These inference methods are expensive because they (approximately) solve an intractable integral from scratch on every separate invocation.
But many inference problems have shared structure: it is reasonable to expect that computing $p(\latentVars | \observedVars_1)$ should give us some information about how to compute $p(\latentVars | \observedVars_2)$.
In fact, there is reason to believe that this is how people are able to perform certain inferences, such as visual perception, so quickly---we have perceived the world many times before, and can leverage that accumulated knowledge when presented with a new perception task~\cite{AmortizedInference}.
This idea of using the results of previous inferences, or precomputation in general, to make later inferences more efficient is called \emph{amortized inference}~\cite{AmortizedInference,StochasticInverses}.

\emph{Learning} a generative model from many data points is a particularly important task that leads to many related inferences.
One wishes to update global beliefs about the true generative model from individual data points (or batches of data points).
While many algorithms are possible for this task, they all require some form of  `parsing' for each data point: doing posterior inference in the current generative model to guess values of local latent variable given each observation.
Because this local parsing inference is needed many many times, it is a good candidate for amortization.
It is plausible that learning to do local inference via amortization would support faster and better global learning, which gives more useful local inferences, leading to a virtuous cycle.


This paper proposes a system for amortized inference in PPLs, and applies it to model learning. Instead of computing $p(\latentVars | \observedVars)$ from scratch for each $\observedVars$, our system instead constructs a program $\guide(\latentVars | \observedVars)$ which takes $\observedVars$ as input and, when run forward, produces samples distributed approximately according to the true posterior $p(\latentVars | \observedVars)$.
We call $\guide$ a \emph{guide program}, following terminology introduced in previous work~\cite{GuidePrograms}.
The system can spend time up-front constructing a good approximation $\guide$ so that at inference time, sampling from $\guide$ is both fast and accurate.

There is a huge space of possible programs $\guide$ one might consider for this task. Rather than posing the search for $\guide$ as a general program induction problem (as was done in previous work~\cite{GuidePrograms}), we restrict $\guide$ to have the same control flow as the original program $p$, but a different data flow.
That is, $\guide$ samples the same random choices as $p$ and in the same order, but the data flowing into those choices comes from a different computation.
In our system, we represent this computation using neural networks.
This design choice reduces the search for $\guide$ to the much simpler continuous problem of optimizing the weights for these networks, which can be done using stochastic gradient descent.

Our system's interface for specifying guide programs is flexible enough to subsume several popular recent approaches to variational inference, including those that perform both inference and model learning. To facilitate this common pattern we introduce the \ic{mapData} construct which represents the boundary between global ``model'' variables and variables local to the data points. Our system leverages the independence between data points implied by \ic{mapData} to enable mini-batches of data and variance reduction of gradient estimates.
We evaluate our proof-of-concept system on a variety of Bayesian networks, topic models, and deep generative models.


Our system has been implemented as an extension to the WebPPL probabilistic programming language~\cite{WebPPL}. Its source code can be found in the WebPPL repository, with additional helper code at \url{https://github.com/probmods/webppl-daipp}.

\section{Background}
\label{sec:background}

\subsection{Probabilistic Programming Basics}
\label{sec:pplbasics}

For our purposes, a probabilistic program defines a generative model $\trueJoint$ of latent variables $\latentVars$ and data $\observedVars$. The model factors as:
\begin{equation}
\trueJoint = p(\observedVars | \latentVars) \prod_i p(\latentVar_i | \latentVars_{<i})
\label{eq:probProgDef}
\end{equation}
The prior probability distribution $p(\latentVars)$ decomposes as a product of conditionals $p_i(\latentVar_i | \latentVars_{<i})$, one for each random choice $\latentVar_i$ in the program. The use of $\latentVars_{<i}$ indicates that a random choice can potentially depend on any or all previous choices made by the program.
$p(\observedVars | \latentVars)$ is the likelihood of the data and need not be a proper probability distribution (i.e. unnormalized factors are acceptable).
Note that $\latentVars$ can vary in length across executions: a probabilistic program can sample a variable number of random variables.

Our system is implemented in the probabilistic programming language WebPPL, which we use for examples throughout this paper~\cite{WebPPL}.
WebPPL is a PPL embedded in Javascript; that is, it adds sampling, conditioning, and inference operators to a purely-functional subset of Javascript.
The following example program illustrates its basic features:
\begin{lstlisting}
var model = function() {
   var x = sample(Bernoulli({p: 0.75}));
   var mu = x ? 2 : 0;
   observe(Gaussian({mu: mu, sigma: 1}), 0.5);
   return x;
};

Infer({method: 'MCMC'}, model);
\end{lstlisting}
This program uses MCMC to compute an approximate posterior distribution over the return value of the function \ic{model}. \ic{model} is a simple generative model with one latent Bernoulli variable (\ic{x}) and one observed Gaussian variable, which in this example is observed to have the value \ic{0.5}. The mean of the observed Gaussian variable (\ic{mu}) is dependent on the value of \ic{x}. Since \ic{model} returns \ic{x}, the result of this program is the posterior marginal distribution over the variable \ic{x}.
In the rest of this paper, we will build on this language, adding guide programs, amortized inference, and model-learning constructs.

\subsection{Inference as Optimization: Variational Inference}
\label{sec:background:variational}

Instead of approximating the posterior $\truePosterior$ with a collection of samples, one could instead try to approximate it via a parameterized distribution $\guide_{\observedVars}(\latentVars ; \phi)$ which is itself easy to sample from.
This is the premise behind variational inference~\cite{VariationalInference}.
The goal is to find parameters $\phi$ such that $\guide_{\observedVars}(\latentVars ; \phi)$ is as close as possible to $\truePosterior$, where closeness is typically measured via KL-divergence.

To use variational inference, one must first choose a parameterized family of distributions $\guide$; one common choice is the \emph{mean-field family}:
\begin{equation*}
\guide^{\textbf{MF}}_{\observedVars}(\latentVars ; \phi) = \prod_i \guide(\latentVar_i ; \phi_i)
\end{equation*}
This is a fully-factored distribution: it approximates the true posterior as an independent product of parameterized marginals, one for each random variable.
Several existing general-purpose variational inference systems use this scheme~\cite{AVIPP,BBVI}.
This is easy to work with, but it does not capture any of the dependencies between variables that may occur in the true posterior.
This limitation is often acceptable because $\guide_{\observedVars}$ is defined relative to a particular observation set $\observedVars$, and thus the parameters are re-optimized for each new $\observedVars$.
Thus, while this scheme provides an alternative to Monte Carlo methods (e.g. MCMC) that can be faster and more reliable, it still solves each inference problem from scratch.

\subsection{Amortized (Variational) Inference}

As mentioned in Section~\ref{sec:introduction}, \emph{amortized inference} refers to the use of previous inference solutions (or other pre-computation) to solve subsequent inference problems faster.
There exists experimental evidence that people leverage experience from prior inference tasks when asked to solve related ones~\cite{AmortizedInference}.
This idea has inspired research into developing amortized inference systems for Bayesian networks~\cite{StochasticInverses,NeuralStochasticInverses}. These systems model $p(\latentVars | \observedVars)$ by inverting the network topology and attempting to learn the local conditional distributions of this inverted graphical model.

Amortized inference can also be achieved through variational inference.
Instead of defining a parametric family $\guide_{\observedVars}(\latentVars ; \phi)$ which is specific to a given $\observedVars$, we can instead define a general family $\guidePosterior$ which is conditional on $\observedVars$; that is, it takes $\observedVars$ as input.
In this setting, the mean field family no longer applies, as the factors of $\guide$ must now be functions of $\observedVars$.
However, we can extend the mean field family to handle input data by using neural networks (or other `side computations'):
\begin{equation*}
\guidePosterior = \prod_i \guide(\latentVar_i ; \text{NN}_i(\observedVars ; \phi))
\end{equation*}
Here, the parameters of each local conditional in the guide $\guide$ are computed via a neural network function $\text{NN}_i$ of $\observedVars$.
This variational family supports amortized inference: one can invest time up-front optimizing the neural network weights such that $\guidePosterior$ is close to $\truePosterior$. When given a never-before-seen $\observedVars$, the guide program forwards $\observedVars$ through the trained networks for fast inference.
Several recent approaches to `neural variational inference' use some instantiation of this design pattern~\cite{NVIL,DLGM,AEVB}.

Such neural guide families are easy to express in our extensions of WebPPL. Our system also allows generalizations of this pattern, such as providing neural nets with previously-made random choices as additional input:
\begin{equation*}
\guidePosterior = \prod_i \guide(x_i ; \text{NN}_i(\observedVars, \latentVars_{<i} ; \phi))
\end{equation*}
Here, $\latentVars_{<i}$ are the random choices made before choice $i$ is sampled. Such guide families have the potential to capture posterior dependencies between latent variables

\subsection{Variational Model Learning}

The amortized variational inference framework presented above can also be used to learn the parameters of the generative model $p$. If $p$ is also parameterized, i.e. $\truePosteriorTheta$, then its parameters $\theta$ can be optimized along with the parameters $\phi$ of the guide program~\cite{NVIL,DLGM,AEVB}.

Our system supports learning generative model parameters in addition to guide parameters.
In the PPL setting it is natural to think of this as a particular model pattern in which there are global parameters or random choices that affect a local `observation model', which in turn is assumed to generate each data point independently; we call this the mapData model pattern.
We will show below how it is easy to use this pattern to do (regularized) maximum-likelihood learning, variational Bayes, or even different methods within the same model.

\section{Specifying Guide Programs}
\label{sec:guideSpec}

In this section, we describe language extensions to WebPPL that allow for the specification of guide programs.
We focus for now on manually-specified guide programs. In Section~\ref{sec:autoGuide}, we build on this interface to automatically derive guide programs.

\subsection{Sampling with Guide Distributions}

Previously, we mentioned that our system restricts guide programs $\guide$ to have the same control flow as the original program $p$, meaning that the guide program samples the same variables in the same order.
Our implementation enforces this restriction by defining the guide program inline with the regular program.
At each \ic{sample} statement, in addition to the distribution that the program $p$ samples from, one can also specify what distribution the guide program $\guide$ should sample from. For example, using the simple program from Section~\ref{sec:pplbasics}:
\begin{lstlisting}
var model = function() {
   var x = sample(Bernoulli({p: 0.75}), {
      guide: Bernoulli({p: 0.475})
   });
   var mu = x ? 2 : 0;
   observe(Gaussian({mu: mu, sigma: 1}), 0.5);
   return x;
};
\end{lstlisting}
In this example, the guide samples from a Bernoulli with a different success probability \ic{p}. This particular value happpens to give the true posterior for this program, since this toy problem is simple enough to solve in closed form.
We note that the guide distribution need not be of the same family as the prior distribution; we will see later how this property can be useful.

\subsection{Declaring Optimizable Parameters}

In real problems, we will not know the optimal guides \emph{a prior} and will instead want to learn guides by specifying guide distributions with tunable parameters:
\begin{lstlisting}
var x = sample(Gaussian({mu: 0, sigma: 1}), {
   guide: Gaussian({mu: param({name: 'guideMu'}), sigma: softplus(param({name: 'guideSigma'}))})
});
\end{lstlisting}
Here, \ic{param(\{name: nm\})} declares an optimizable, real-valued parameter named \ic{nm}; as we'll see later, this function can also take a \ic{dims} argument specifying the dimensions of a vector, matrix, or tensor-valued parameter.
Since the standard deviation \ic{sigma} of the Gaussian guide distribution must be positive, we use the \ic{softplus}\footnote{$\text{softplus}(x) = \log(\exp(x) + 1)$} function to map the unbounded value returned by \ic{param} to $\Reals^{+}$; our system includes similar transforms for parameters with other domains (e.g. \ic{sigmoid} for parameters defined over the interval $[0, 1]$).
Parameters must be named so they can be disambiguated by the optimization engine.

Using variational parameters directly as the guide distribution parameters (as done above) results in a mean field approximation for the variable \ic{x}, as mentioned in Section~\ref{sec:background:variational}.
We can also compute the guide parameters via a neural network:
\begin{lstlisting}
// Observed value
var y = 0.5;

// Neural net setup
var nIn = 1;
var nHidden = 3;
var nOut = 2;

var model = function() {
   // Neural net params
   var W1 = param({dims: [nHidden, nIn], name: 'W1'});
   var b1 = param({dims: [nHidden, 1], name: 'b1'});
   var W2 = param({dims: [nOut, nHidden], name: 'W2'});
   var b2 = param({dims: [nOut, 1], name: 'b2'});

   // Use neural net to compute guide params
   var nnInput = Vector([y]);
   var nnOutput = linear(sigmoid(linear(nnInput, W1, b1)), W2, b2);

   var x = sample(Gaussian({mu: 0, sigma: 1}), {
      guide: Gaussian({mu: T.get(nnOutput, 0), sigma: softplus(T.get(nnOutput, 1))})
   });
   observe(Gaussian({mu: x, sigma: 0.5}), y);
   return x;
};
\end{lstlisting}
Explicitly declaring parameters for and defining the structure of large neural networks can become verbose, so we can instead use the adnn\footnote{\url{https://github.com/dritchie/adnn}} neural net library to include neural nets in our programs:
\begin{lstlisting}
// Observed value
var y = 0.5;

// Neural net setup
var guideNet = nn.mlp(1, [
   {nOut: 3, activation: nn.sigmoid},
   {nOut: 2}
], 'guideNet');

var model = function() {
   // Use neural net to compute guide params
   var nnInput = Vector([y]);
   var nnOutput = nnEval(guideNet, nnInput);

   var x = sample(Gaussian({mu: 0, sigma: 1}), {
      guide: Gaussian({mu: T.get(nnOutput, 0), sigma: softplus(T.get(nnOutput, 1))})
   });
   observe(Gaussian({mu: x, sigma: 0.5}), y);
   return x;
};
\end{lstlisting}
In this case, the \ic{nn.mlp} constructor has created a \ic{guideNet} object with its own parameters; these parameters are registered with the optimization engine when \ic{nnEval} is called.

\subsection{Iterating over Observed Data}

The previous examples have thus far conditioned on a single observation. But real models condition on multiple observations. Our system expresses this pattern with the \ic{mapData} function:
\begin{lstlisting}
var obs = loadData('data.json');   // List of observations
var guideNet = nn.mlp(1, [
   {nOut: 3, activation: nn.sigmoid},
   {nOut: 2}
], 'guideNet');
var model = function() {
   var mu_x = 0;
   var sigma_x = 1;
   var sigma_y = 0.5;
   var latents = mapData({data: obs, batchSize: 100}, function(y) {
      var nnInput = Vector([y]);
      var nnOutput = nnEval(guideNet, nnInput);
      var x = sample(Gaussian({mu: mu_x, sigma: sigma_x}), {
         guide: Gaussian({mu: T.get(nnOutput, 0), sigma: softplus(T.get(nnOutput, 1))})
      });
      observe(Gaussian({mu: x, sigma: sigma_y}), y);
      return x;
   });
   return latents;
};
\end{lstlisting}
\ic{mapData} operates much like \ic{map} in a typical functional programming language, but it has two important features: (1) the optimization engine treats every execution of the mapped function as independent, and thus (2) the optimization engine can operate on stochastic mini-batches of the data, sized according to the \ic{batchSize} option.
Property (2) is clearly important for efficient, scalable optimization; we will see in Section~\ref{sec:optimization} how property (1) can also be directly leveraged to improve optimization.

\subsection{Defining Learnable Models}

Thus far we have focused on defining parameterized guides for inference.
Parameterized guides can also be used to make models learnable.
The following three code blocks show possible replacements for line 7 of the previous example, replacing the hardcoded constant \ic{mu_x = 0} with a learnable version:

\lstdefinestyle{learnableModels}{numbers=none,basicstyle=\fontsize{6.5pt}{6.75pt}\selectfont\ttfamily}

\begin{minipage}{0.33\linewidth}
\begin{lstlisting}[style=learnableModels]
// Maximum likelihood
var mu_x = sample(
   ImproperUniform(), {
      guide: Delta({
         v: param({name: 'mu_x'})
      })
   });
// For convenience, equivalent to:
// modelParam({name: 'mu_x'});
\end{lstlisting}
\end{minipage}
\hspace{-1em}
\begin{minipage}{0.33\linewidth}
\begin{lstlisting}[style=learnableModels]
// L2-regularized
// maximum likelihood
var mu_x = sample(
   Gaussian({mu: 0, sigma: 1}), {
      guide: Delta({
         v: param({name: 'mu_x'})
      })
   });
\end{lstlisting}
\end{minipage}
\hspace{-1em}
\begin{minipage}{0.33\linewidth}
\begin{lstlisting}[style=learnableModels]
// Variational Bayes
var mu_x = sample(
   Gaussian({mu: 0, sigma: 1}), {
      guide: Gaussian({
        mu: param({name: 'mu_x_m'}),
        sigma: softplus(param({name: 'mu_x_s'}))
      })
   });
\end{lstlisting}
\end{minipage}

The code in the left block results in maximum likelihood estimation. By using a Delta distribution as a guide, the inference engine will optimize for the single best parameter value (i.e. the center of the Delta distribution). Maximum likelihood behavior comes from using an improper uniform distribution (i.e. distribution with probability one everywhere) as a prior. This pattern is common enough that our system provides convenient shorthand for it (the \ic{modelParam} function).
In the middle code of block, we demonstrate L2-regularized maximum likelihood learning by replacing the improper uniform prior with a Gaussian prior. The inference engine will still predict a point estimate for \ic{mu_x}, but the Gaussian prior results in L2 regularization.
Finally, the right block shows a variational Bayesian model: \ic{mu_x} has a Gaussian prior distribution, and the guide samples \ic{mu_x} from an approximate variational Gaussian posterior with optimizable parameters.
This form learns a distribution over generative models, maintaining an estimate of uncertainty about the true model.

Note that we could have alternatively implemented maximum likelihood via a direct parameterization, e.g. \ic{var mu_x = param(\{name: 'mu_x'\})}. However, this style results in $p$ being parameterized in addition to $\guide$. This complicates both the implementation and the theoretical analyses that we show later in this paper.
In contrast, our chosen scheme has only the guide parameterized; learning the model is just part of learning the guide.

\subsection{Examples: Simple Bayesian Networks}

The example code we have built up in this section describes a Bayesian network with one continuous latent variable per continuous observation. Figure~\ref{fig:bn_oneLatent} Top shows the fully assembled code (using maximum likelihood estimation for the generative model parameters), along with a graphical model depiction using the notation of Kigma and Welling~\cite{AEVB}. In this diagram, solid arrows indicate dependencies in the generative model given by the main program, and dashed arrows indicate dependencies in the guide program. $\phi$ is shorthand for all the neural network parameters in the guide program.

Figure~\ref{fig:bn_oneLatent} Bottom shows how to modify this code to instead have one discrete latent variable per observation; this is equivalent to a Gaussian mixture model. In this example, the \ic{simplex} function maps a vector in $\Reals^{n-1}$ to the $n$-dimensional simplex (i.e. a vector whose entries sum to one). This process produces a vector of weights suitable for use as the component probabilities of a discrete random variable.

\lstdefinestyle{smaller}{basicstyle=\fontsize{7pt}{7.5pt}\selectfont\ttfamily}

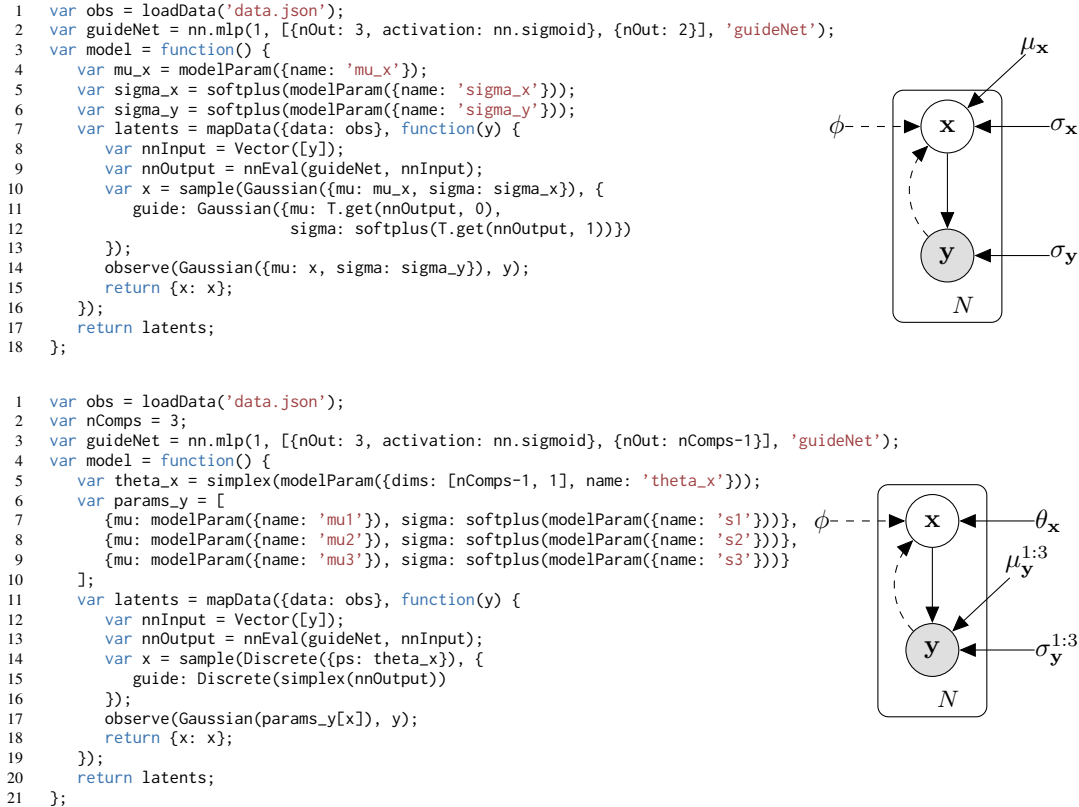
\begin{figure}

\begin{minipage}{\linewidth}
\begin{minipage}{0.66\linewidth}
\begin{lstlisting}[style=smaller]
var obs = loadData('data.json');
var guideNet = nn.mlp(1, [{nOut: 3, activation: nn.sigmoid}, {nOut: 2}], 'guideNet');
var model = function() {
   var mu_x = modelParam({name: 'mu_x'});
   var sigma_x = softplus(modelParam({name: 'sigma_x'}));
   var sigma_y = softplus(modelParam({name: 'sigma_y'}));
   var latents = mapData({data: obs}, function(y) {
      var nnInput = Vector([y]);
      var nnOutput = nnEval(guideNet, nnInput);
      var x = sample(Gaussian({mu: mu_x, sigma: sigma_x}), {
         guide: Gaussian({mu: T.get(nnOutput, 0),
                          sigma: softplus(T.get(nnOutput, 1))})
      });
      observe(Gaussian({mu: x, sigma: sigma_y}), y);
      return {x: x};
   });
   return latents;
};
\end{lstlisting}
\end{minipage}
\begin{minipage}{0.33\linewidth}
\begin{flushright}
\begin{tikzpicture}[scale=1, transform shape]
\node[obs] (y1) {$\mathbf{y}$};
\node[latent, above=of y1] (x1) {$\mathbf{x}$};
\node[const, left=of x1] (phi1) {$\phi$};
\node[const, above right=of x1] (mu_x) {$\mu_{\mathbf{x}}$};
\node[const, right=of x1] (sigma_x) {$\sigma_{\mathbf{x}}$};
\node[const, right=of y1] (sigma_y) {$\sigma_{\mathbf{y}}$};
\edge [dashed] {phi1} {x1};
\edge {mu_x} {x1};
\edge {sigma_x} {x1};
\edge {sigma_y} {y1};
\draw (y1) edge[out=135,in=225,->,dashed] (x1);
\edge {x1} {y1};
\plate [xscale=1.5] {} {(y1)(x1)} {$N$} ;
\end{tikzpicture}
\end{flushright}
\end{minipage}
\end{minipage}

\begin{minipage}{\linewidth}
\begin{minipage}{0.66\linewidth}
\begin{lstlisting}[style=smaller]
var obs = loadData('data.json');
var nComps = 3;
var guideNet = nn.mlp(1, [{nOut: 3, activation: nn.sigmoid}, {nOut: nComps-1}], 'guideNet');
var model = function() {
   var theta_x = simplex(modelParam({dims: [nComps-1, 1], name: 'theta_x'}));
   var params_y = [
      {mu: modelParam({name: 'mu1'}), sigma: softplus(modelParam({name: 's1'}))},
      {mu: modelParam({name: 'mu2'}), sigma: softplus(modelParam({name: 's2'}))},
      {mu: modelParam({name: 'mu3'}), sigma: softplus(modelParam({name: 's3'}))}
   ];
   var latents = mapData({data: obs}, function(y) {
      var nnInput = Vector([y]);
      var nnOutput = nnEval(guideNet, nnInput);
      var x = sample(Discrete({ps: theta_x}), {
         guide: Discrete(simplex(nnOutput))
      });
      observe(Gaussian(params_y[x]), y);
      return {x: x};
   });
   return latents;
};
\end{lstlisting}
\end{minipage}
\begin{minipage}{0.33\linewidth}
\begin{flushright}
\begin{tikzpicture}[scale=1, transform shape]
\node[obs] (y1) {$\mathbf{y}$};
\node[latent, above=of y1] (x1) {$\mathbf{x}$};
\node[const, left=of x1] (phi1) {$\phi$};
\node[const, right=of x1] (theta_x) {$\theta_{\mathbf{x}}$};
\node[const, above right=of y1] (mu_y) {$\mu^{1\colon3}_{\mathbf{y}}$};
\node[const, right=of y1] (sigma_y) {$\sigma^{1\colon3}_{\mathbf{y}}$};
\edge [dashed] {phi1} {x1};
\edge {theta_x} {x1};
\edge {mu_y} {y1};
\edge {sigma_y} {y1};
\draw (y1) edge[out=135,in=225,->,dashed] (x1);
\edge {x1} {y1};
\plate [xscale=1.5] {} {(y1)(x1)} {$N$} ;
\end{tikzpicture}
\end{flushright}
\end{minipage}
\end{minipage}

\caption{WebPPL code and corresponding graphical models for simple Bayesian networks with one latent variable per observation. \emph{Top:} Continuous latent variable. \emph{Bottom:} Discrete latent variable with 3 discrete values (i.e. a Gaussian mixture model with 3 mixture components).}
\label{fig:bn_oneLatent}

\end{figure}

Figure~\ref{fig:bn_twoLatent} Top shows a slightly more complex Bayesian network with two latent continuous variables. Note that the guide program in this example predicts the two latent variables independently given the observation $\vecstyle{y}$. In Figure~\ref{fig:bn_twoLatent} Bottom, we make some small changes to the code (lines 3 and 17, highlighted in green) to instead have the guide program predict the second latent variable $\vecstyle{x}_2$ as a function of the first latent variable $\vecstyle{x}_1$. This small change allows the guide to capture a posterior dependency that was ingored by the first version.

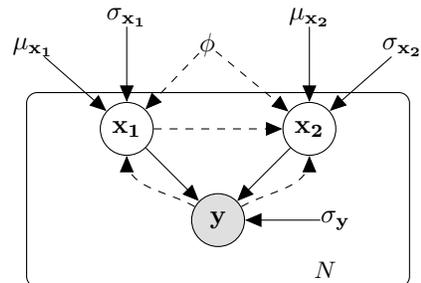
\begin{figure}

\begin{minipage}{\linewidth}
\begin{minipage}{0.66\linewidth}
\begin{lstlisting}[style=smaller]
var obs = loadData('data.json');
var guideNet1 = nn.mlp(1, [{nOut: 3, activation: nn.sigmoid}, {nOut: 2}], 'guideNet1');
var guideNet2 = nn.mlp(1, [{nOut: 3, activation: nn.sigmoid}, {nOut: 2}], 'guideNet2');
var model = function() {
   var mu_x1 = modelParam('mu_x1');
   var sigma_x1 = softplus(modelParam({name: 'sigma_x1'}));
   var mu_x2 = modelParam('mu_x2');
   var sigma_x2 = softplus(modelParam({name: 'sigma_x2'}));
   var sigma_y = softplus(modelParam({name: 'sigma_y'}));
   var latents = mapData({data: obs}, function(y) {
      var nnInput1 = Vector([y]);
      var nnOutput1 = nnEval(guideNet1, nnInput1);
      var x1 = sample(Gaussian({mu: mu_x1, sigma: sigma_x1}), {
         guide: Gaussian({mu: T.get(nnOutput1, 0),
                          sigma: softplus(T.get(nnOutput1, 1))})
      });
      var nnInput2 = Vector([y]);
      var nnOutput2 = nnEval(guideNet2, nnInput2);
      var x2 = sample(Gaussian({mu: mu_x2, sigma: sigma_x2}), {
         guide: Gaussian({mu: T.get(nnOutput2, 0),
                          sigma: softplus(T.get(nnOutput2, 1))})
      });
      observe(Gaussian({mu: x1 + x2, sigma: sigma_y}), y);
      return {x1: x1, x2: x2};
   });
   return latents;
};
\end{lstlisting}
\end{minipage}
\begin{minipage}{0.33\linewidth}
\begin{flushright}
\begin{tikzpicture}[scale=1, transform shape]
\node[obs] (y1) {$\mathbf{y}$};
\node[latent, above left=of y1] (x1) {$\mathbf{x_1}$};
\node[latent, above right=of y1] (x2) {$\mathbf{x_2}$};
\node[const, above right=of x1] (phi1) {$\phi$};
\node[const, above left=of x1] (mu_x1) {$\mu_{\mathbf{x_1}}$};
\node[const, above=of x1] (sigma_x1) {$\sigma_{\mathbf{x_1}}$};
\node[const, above=of x2] (mu_x2) {$\mu_{\mathbf{x_2}}$};
\node[const, above right=of x2] (sigma_x2) {$\sigma_{\mathbf{x_2}}$};
\node[const, right=of y1] (sigma_y) {$\sigma_{\mathbf{y}}$};
\edge [dashed] {phi1} {x1};
\edge [dashed] {phi1} {x2};
\edge {mu_x1} {x1};
\edge {sigma_x1} {x1};
\edge {mu_x2} {x2};
\edge {sigma_x2} {x2};
\edge {sigma_y} {y1};
\draw (y1) edge[out=150,in=270,->,dashed] (x1);
\draw (y1) edge[out=30,in=270,->,dashed] (x2);
\edge {x1} {y1};
\edge {x2} {y1};
\plate [xscale=1.5] {} {(y1)(x1)(x2)} {$N$} ;
\end{tikzpicture}
\end{flushright}
\end{minipage}
\end{minipage}

\begin{minipage}{\linewidth}
\begin{minipage}{0.66\linewidth}
\begin{lstlisting}[style=smaller]
var obs = loadData('data.json');
var guideNet1 = nn.mlp(1, [{nOut: 3, activation: nn.sigmoid}, {nOut: 2}], 'guideNet1');
var guideNet2 = nn.mlp(<@\hilite{2}@>, [{nOut: 3, activation: nn.sigmoid}, {nOut: 2}], 'guideNet2');
var model = function() {
   var mu_x1 = modelParam('mu_x1');
   var sigma_x1 = softplus(modelParam({name: 'sigma_x1'}));
   var mu_x2 = modelParam('mu_x2');
   var sigma_x2 = softplus(modelParam({name: 'sigma_x2'}));
   var sigma_y = softplus(modelParam({name: 'sigma_y'}));
   var latents = mapData({data: obs}, function(y) {
      var nnInput1 = Vector([y]);
      var nnOutput1 = nnEval(guideNet1, nnInput1);
      var x1 = sample(Gaussian({mu: mu_x1, sigma: sigma_x1}), {
         guide: Gaussian({mu: T.get(nnOutput1, 0),
                          sigma: softplus(T.get(nnOutput1, 1))})
      });
      var nnInput2 = <@\hilite{Vector([y, x1]);}@>
      var nnOutput2 = nnEval(guideNet2, nnInput2);
      var x2 = sample(Gaussian({mu: mu_x2, sigma: sigma_x2}), {
         guide: Gaussian({mu: T.get(nnOutput2, 0),
                          sigma: softplus(T.get(nnOutput2, 1))})
      });
      observe(Gaussian({mu: x1 + x2, sigma: sigma_y}), y);
      return {x1: x1, x2: x2};
   });
   return latents;
};
\end{lstlisting}
\end{minipage}
\begin{minipage}{0.33\linewidth}
\begin{flushright}
\begin{tikzpicture}[scale=1, transform shape]
\node[obs] (y1) {$\mathbf{y}$};
\node[latent, above left=of y1] (x1) {$\mathbf{x_1}$};
\node[latent, above right=of y1] (x2) {$\mathbf{x_2}$};
\node[const, above right=of x1] (phi1) {$\phi$};
\node[const, above left=of x1] (mu_x1) {$\mu_{\mathbf{x_1}}$};
\node[const, above=of x1] (sigma_x1) {$\sigma_{\mathbf{x_1}}$};
\node[const, above=of x2] (mu_x2) {$\mu_{\mathbf{x_2}}$};
\node[const, above right=of x2] (sigma_x2) {$\sigma_{\mathbf{x_2}}$};
\node[const, right=of y1] (sigma_y) {$\sigma_{\mathbf{y}}$};
\edge [dashed] {phi1} {x1};
\edge [dashed] {phi1} {x2};
\edge {mu_x1} {x1};
\edge {sigma_x1} {x1};
\edge {mu_x2} {x2};
\edge {sigma_x2} {x2};
\edge {sigma_y} {y1};
\draw (y1) edge[out=150,in=270,->,dashed] (x1);
\draw (y1) edge[out=30,in=270,->,dashed] (x2);
\edge [dashed] {x1} {x2};
\edge {x1} {y1};
\edge {x2} {y1};
\plate [xscale=1.5] {} {(y1)(x1)(x2)} {$N$} ;
\end{tikzpicture}
\end{flushright}
\end{minipage}
\end{minipage}

\caption{WebPPL code and corresponding graphical models for simple Bayesian networks with two latent variables per observation. \emph{Top:} Guide program predicts the two latents independently. \emph{Bottom:} Changing the guide program to treat the second latent variable as conditional on the first (green highlights show changes to the code).}
\label{fig:bn_twoLatent}
\end{figure}


\section{Optimizing Parameters}
\label{sec:optimization}

Now that we have seen how to author learnable guide programs, we will describe how to optimize the parameters of those programs. 

\subsection{ELBo: The Variational Objective}

In Section~\ref{sec:background:variational}, we mentioned that the goal of variational inference is to find values of the parameters $\phi$ for our guide program $\guidePosterior$ such that it is as close as possible to the true posterior $\truePosterior$, where closeness is measured via KL-divergence. The KL-divergence between two general distributions is intractable to compute; however, some straightforward algebra produces an objective that is tractable (following the derivation of Wingate and Weber~\cite{AVIPP}):
\begin{align}
\begin{split}
\KLD(\guidePosterior || \truePosterior)
&= \int_{\latentVars} \guidePosterior \log \frac{\guidePosterior}{\truePosterior}\\
&= \int_{\latentVars} \guidePosterior \log \frac{\guidePosterior}{\trueJoint} + \log \dataMarginal\\
&= \expect_{\guide}[ \log(\guidePosterior -  \trueJoint ) ] + \log \dataMarginal\\
&= \log \dataMarginal - \elboDef\\
&= \log \dataMarginal - \elboPhi \geq 0
\end{split}
\label{eq:elbo}
\end{align}
where the last inequality follows because KL-divergence is non-negative. This in turn implies that $\elboPhi$ is a lower bound on the log marginal likelihood of the data (i.e. evidence) $\log \dataMarginal$. Accordingly, $\elboPhi$ is sometimes referred to as the `Evidence Lower Bound', or ELBo~\cite{BBVI}. Maximizing the ELBo corresponds to minimizing the KL-divergence.
When $\log p(\observedVars | \latentVars)$ can include unnormalized factors (as in our implementation), this is more properly called the variational free energy.


For an alternative derivation of the ELBo using Jensen's inequality, see Mnih and Gregor~\cite{NVIL} and Jordan et al.~\cite[p. 213]{VariationalInference}.

\subsection{ELBo Gradient Estimators}

Maximizing the ELBo requires estimating its gradient with respect to the parameters.
There are two well-known approaches to performing this estimation:

\paragraph{Likelihood Ratio (LR) Estimator:}
In the general case, the gradient of the ELBo with respect to $\phi$ can be estimated by:
\begin{align}
\begin{split}
\gradparams \elboPhi
&= \gradparams \elboDef\\
&= \expect_\guide[ \gradparams \log \guidePosterior ( \log \trueJoint - \log \guidePosterior ) ]
\end{split}
\label{eq:lr}
\end{align}
This estimator goes by the names `likelihood ratio estimator'~\cite{LikelihoodRatioEstimator} and `score function estimator'~\cite{ScoreFunctionEstimator}, and it is also equivalent to the REINFORCE policy gradient algorithm in the reinforcement learning literature~\cite{REINFORCE}. The derivations of this estimator most relevant to our setting can be found in Wingate and Weber~\cite{AVIPP} and Mnih and Gregor~\cite{NVIL}.
Intuitively, each gradient update step using the LR estimator pushes the parameters $\phi$ in the direction $\gradparams \log \guidePosterior$---that is, the direction that will maximize the probability of $\latentVars$ under the guide. But since the goal of optimization is to learn an approximation to the true posterior, this update is weighted based on how probable $\latentVars$ is under the joint distribution $\trueJoint$ (which is proportional to the true posterior). If $\latentVars$ has high probability under the joint but low probability under the current guide, then the $( \log \trueJoint - \log \guidePosterior )$ term produces a large update (i.e. the guide should assign much higher probability to $\latentVars$ than it currently does). If the joint and the guide assign equal probability to $\latentVars$, then the update will have zero magnitude. If the joint assigns \emph{lower} probability to $\latentVars$ than the guide does, the resulting gradient update will move the parameters $\phi$ in the opposite direction of $\gradparams \log \guidePosterior$.

The LR estimator is straightforward to compute, requiring only that $\log \guidePosterior$ be differentiable with respect to $\phi$ (the mean field and neural guide families presented in Section~\ref{sec:background} satisfy this property). However, it is known to exhibit high variance. This problem is amenable to several variance reduction techniques, some of which we will employ later in this section.

\paragraph{Pathwise (PW) Estimator:}

Equation~\ref{eq:lr} suffers from high variance because the gradient can no longer be pushed inside the expectation: the expectation is with respect to $\guide$, and $\guide$ depends on the parameters $\phi$ with respect to which we are differentiating.
However, in certain cases, it is possible to re-write the ELBo such that the expectation distribution does not depend on $\phi$.
This situation occurs whenever the latent variables $\latentVars$ can be expressed as samples from an un-parameterized distribution, followed by a parameterized deterministic transformation:
\begin{equation*}
\latentVars = \xformedVarsPhi \hspace{2em} \reparamVars \sim \reparamDist(\cdot)
\end{equation*}
For example, sampling from a Gaussian distribution $\normdist(\mu, \sigma)$ can be expressed as $\mu + \sigma \cdot \reparamVar$, where $\reparamVar \sim \normdist(0, 1)$. Continuous random variables which are parameterized by a location and a scale parameter naturally support this type of transformation, and other types of continuous variables can often be well-approximated by deterministic transformations of unit normal variables~\cite{ADVI}.

Using this `reparameterization trick'~\cite{AEVB} allows the ELBo gradient to be rewritten as:
\begin{align}
\begin{split}
\gradparams \elboPhi
&= \gradparams \elboDef\\
&= \gradparams \expect_\reparamDist [ \log p(\xformedVarsPhi, \observedVars) - \log \guide(\xformedVarsPhi | \observedVars ; \phi) ]\\
&= \expect_\reparamDist [\gradparams ( \log p(\xformedVarsPhi, \observedVars) - \log \guide(\xformedVarsPhi | \observedVars ; \phi) ) ]
\end{split}
\label{eq:pw}
\end{align}
This estimator is called the `pathwise derivative estimator'~\cite{PathwiseEstimator}.
It transforms both the guide and target distributions into distributions over independent random `noise' variables $\reparamVars$, followed by complex, parameterized, deterministic transformations. Given a fixed assignment to the noise variables, derivatives can propagate from the final log probabilities back to the input parameters, leading to much more stable gradient estimates than with the LR estimator.

\subsection{Unified Gradient Estimator for Probabilistic Programs}
\label{sec:optimization:unifiedEstimator}

A general probabilistic program makes many random choices, some of which are amenable to the reparameterization trick and others of which are not.
Discrete random choices are never reparameterizable. Continuous random choices are reparameterizable if they can be expressed as a parameterized, deterministic transformation of a parameter-less random choce. In our implementation, every continuous random choice type either has this property or is well-approximated (and thus can be guided) by a random choice type that does (see Appendix~\ref{sec:appendix_reparam}). Thus, for the rest of the paper, we will equate  continuous with reparameterizable and discrete with non-reparameterizable.
To optimize the ELBo for a probabilistic program $p$ and an associated guide program $\guide$, we seek a single, unified gradient estimator that handles both discrete and continuous choices.

First, to simplify notation, we drop the dependency on $\phi$ from all derivations that follow. This is equivalent to making the parameters globally available, which is also true of our system (i.e. \ic{param} can be called anywhere).

Second, we assume that all random choices made by the guide program are first drawn from a distribution $\reparamDist$ and then transformed by a deterministic function $\reparamXform$. Under this assumption, the distribution defined by a guide program factors as:
\begin{equation*}
\guidePostNoPhi = \prod_i \guide(\reparamXform_i(\reparamVar_i) | \reparamXform(\reparamVars_{<i}), \observedVars) \hspace{2em} \reparamVar_i \sim \reparamDist_i(\cdot)
\end{equation*}
As mentioned in Section~\ref{sec:pplbasics}, the length of $\reparamVars$ can vary across executions. However, since the guide program $\guide$ by construction samples the same random choices in the same order as the target program $p$, $\guide$ factors the same way as $p$ (Equation~\ref{eq:probProgDef}) for any given execution. 
(Also note that the gradient with respect to parameters that affect only variables that are not created on a given execution will be zero, allowing us to simply ignore them.)

The distributions $\reparamDist_i$ and functions $\reparamXform_i$ have different meanings depending on whether the variable $\reparamVar_i$ is continuous or discrete:
\begin{itemize}
\item{Continuous $\reparamVar_i$: $\reparamDist_i$ is a unit normal or uniform distribution, and $\reparamXform_i$ is a parameterized transform. This is a direct application of the reparameterization trick. In this case, each local transformation $\reparamXform_i$ may also depend on the previous noise variables $\reparamVars_{<i}$, as choices occuring later in the program may be compound transformations of earlier choices.}
\item{Discrete $\reparamVar_i$: $\reparamDist_i$ = $\guide_i$, and $\reparamXform_i$ is the identity function. This allows discrete choices to be represented in the reparameterization trick framework (without actually reparametrizing).}
\end{itemize}
Given these assumptions, we can derive an estimator for the ELBo gradient:
\begin{align}
\label{eq:hybridEstimator}
\gradparams \elbo
&= \gradparams \expect_\reparamDist [ \log p(\xformedVars, \observedVars) - \log \guide(\xformedVars | \observedVars) ]\\
&= \expect_\reparamDist [ \gradparams \log \reparamDist(\reparamVars | \observedVars) ( \log p(\xformedVars, \observedVars) - \log \guide(\xformedVars | \observedVars) ) + \gradparams( \log p(\xformedVars, \observedVars) - \log \guide(\xformedVars | \observedVars) )]\nonumber\\
&= \expect_\reparamDist [ \underbrace{\gradparams \log \reparamDist(\reparamVars | \observedVars) W(\reparamVars, \observedVars)}_{\text{LR term}} + \underbrace{\gradparams \log p(\xformedVars, \observedVars) - \gradparams \log \guide(\xformedVars | \observedVars)}_{\text{PW term}} ]\nonumber
\end{align}
where $W(\reparamVars, \observedVars) = \log p(\xformedVars, \observedVars) - \log \guide(\xformedVars | \observedVars)$; see Appendix~\ref{sec:appendix:estDerivation} for the derivation. 

This estimator includes the LR and PW estimators as special cases. If all random choices are reparameterized (i.e. they are all continuous), then $\reparamDist(\reparamVars | \observedVars)$ does not depend on $\phi$, thus $\gradparams \log \reparamDist(\reparamVars | \observedVars)$ is zero and the LR term drops out, leaving only the PW term. If no random choices are reparameterized (i.e. they are all discrete), then the $\gradparams \log \guide(\xformedVars | \observedVars)$ term drops out, using the identity $\expect_f[\nabla \log f(x)] = 0$ (see Appendix~\ref{sec:appendix:zeroexp}). The $\gradparams \log p(\xformedVars, \observedVars)$ term is also zero, since only $\guide$ and not $p$ is dependent on $\phi$, which leaves only the LR term.

While Equation~\ref{eq:hybridEstimator} is a correct estimator for the ELBo gradient, like the LR estimator, the presence of discrete (i.e. non-reparameterized) random choices can lead to high variance. Thus we modify this estimator through three variance reduction techniques:
\begin{enumerate}
\item{Replace $W(\reparamVars, \observedVars)$ with a separate $w_i(\reparamVars, \observedVars)$ for each factor $\reparamDist(\reparamVar_i | \reparamVars_{<i}, \observedVars)$ of $\reparamDist(\reparamVars | \observedVars)$, as there exist independencies that can be exploited to reveal zero-expectation terms in $W(\reparamVars, \observedVars)$ for each $i$.}
\item{Subtract a constant `baseline' term $b_i$ from each $w_i(\reparamVars, \observedVars$). This does not change the expectation, but it can reduce its variance, if designed carefully.}
\item{Factor $\gradparams \log \guide(\xformedVars | \observedVars)$ in the PW term and remove factors corresponding to discrete (i.e. non-reparameterized) choices, since, as noted above, they have zero expectation.}
\end{enumerate}
Further details and correctness proofs for these three steps can be found in Appendix~\ref{sec:appendix_proofs}. Applying them leads to the following estimator, which is the estimator actually used by our system:
\begin{align}
\begin{split}
\gradparams \elbo
&= \expect_\reparamDist \Bigl[ \hspace{0.5em}
\sum_{i} \gradparams \log \reparamDist(\reparamVar_i | \reparamVars_{<i}, \observedVars) ( w_i(\reparamVars, \observedVars) - b_i )
+ \gradparams \log p(\xformedVars, \observedVars) \hspace{0.5em}
- \sum_{i \in \mathcal{C}} \gradparams \log \guide(\reparamXform_i(\reparamVar_i) | \reparamXform(\reparamVars_{<i}), \observedVars)
\Bigr]\\
&= \expect_\reparamDist \Bigl[ \hspace{0.5em}
\sum_{i \in \mathcal{D}} \gradparams \log \guide(\reparamXform_i(\reparamVar_i) | \reparamXform(\reparamVars_{<i}), \observedVars) ( w_i(\reparamVars, \observedVars) - b_i )
+ \gradparams \log p(\xformedVars, \observedVars) \hspace{0.5em}
- \sum_{i \in \mathcal{C}} \gradparams \log \guide(\reparamXform_i(\reparamVar_i) | \reparamXform(\reparamVars_{<i}), \observedVars)
\Bigr]\\
&= \expect_\reparamDist \Bigl[ \hspace{0.5em}
\gradparams \log p(\xformedVars, \observedVars) \hspace{0.5em}
- \sum_{i} \gradparams \log \guide(\reparamXform_i(\reparamVar_i) | \reparamXform(\reparamVars_{<i}), \observedVars) v_i(\reparamVars, \observedVars)
\Bigr]\\
v_i(\reparamVars, \observedVars) &=
\begin{cases}
- ( w_i(\reparamVars, \observedVars) - b_i ) & \text{if } i \in \mathcal{D}\\
1 & \text{if } i \in \mathcal{C}
\end{cases}
\end{split}
\label{eq:finalEstimator}
\end{align}
where $\mathcal{C}$ and $\mathcal{D}$ are the sets of indices for all continuous and discrete random choices in the program execution, respectively.
In the second line, we used the fact (mentioned above) that $\gradparams \log \reparamDist(\reparamVar_i | \reparamVars_{<i}, \observedVars) = 0$ for reparameterized (continuous) choices, and that $\reparamDist = \guide$ and $\reparamXform$ is the identity for non-reparameterized (discrete) choices.

Equation~\ref{eq:finalEstimator} is similar to the `surrogate loss function' gradient estimator of Schulman et al.~\cite{StochasticComputationGraphs}, as an execution of a probabilistic program corresponds to one of their stochastic computation graphs. Their analysis is concerned with general stochastic objectives, however, while we focus particularly on the ELBo.




\subsection{Optimization Interface}

In Section~\ref{sec:pplbasics}, we showed how WebPPL programs use the \ic{Infer} function to perform non-amortized inference on a \ic{model} function. To optimize parameters for amortized inference, WebPPL provides an \ic{Optimize} function with a similar interface:
\begin{lstlisting}
var model = function() {
   // Use sample, guide, mapData, etc.
};

var params = Optimize(model, {
   steps: 100,
   optMethod: 'adam'
});
\end{lstlisting}
The code above performs 100 gradient update steps on \ic{model} using the Adam stochastic optimization method~\cite{Adam}.
The return value \ic{params} of this function is a map from parameter names to optimized parameter values.

\section{Using Learned Parameters}
\label{sec:usingLearnedGuides}

Given a set of learned parameters, our system can predict latent variables, generate synthetic data, or further refine parameters for a new dataset or a new model.

\subsection{Predicting Latent Variables}

A learned guide can be used to make inferences about new, never-before-seen observations. As an example, we'll use the Gaussian mixture model program in Figure~\ref{fig:bn_oneLatent} Bottom and show how to predict cluster assignments for new observations. Note that the observations used by the program are assigned to the \ic{obs} variable, which is then passed to \ic{mapData}. WebPPL is a purely functional language, so it does not support assigning a new dataset to \ic{obs}. However, it does provide a special \ic{globalStore} object whose fields can be re-assigned. With this in mind, we modify the Gaussian mixture model program as follows\footnote{An alternative to using \ic{globalStore} for mutation would be to re-create the \ic{model} function, closing over the test data instead of the training data.}:
\begin{lstlisting}
globalStore.data = loadData('data.json');
// Set up guide neural net
var model = function() {
   // Set up generative parameters
   var latents = mapData({data: globalStore.data}, function(y) {
      // Guided sample latents, observe data points
   });
   return latents;
};
\end{lstlisting}
Now we can easily swap datasets using \ic{globalStore.data}. Given a set of learned parameters \ic{params} for this program, we can obtain a sample prediction for the latent variables for a new dataset:
\begin{lstlisting}
globalStore.data = loadData('data_test.json');	// Load new test data set

// Forward sample from the guide
sample(Infer({method: 'forward', guide: true, params: params}, model));

// Use the guide as a Sequential Monte Carlo importance sampler
sample(Infer({method: 'SMC', particles: 100, params: params}));
\end{lstlisting}
We can make predictions either by running the guide program forward, or if the true posterior is very complex and the learned guide only partially approximates it, we can use the guide program as an importance sampler within Sequential Monte Carlo.

\subsection{Generating Synthetic Data}

Forward sampling from the guide can also be used to generate synthetic data from the learned distribution. If we make a slightly modified version of the Gaussian mixture model (call it \ic{modelGen}) that samples data instead of observing it, we can used forward sampling with the optimized parameters \ic{params} to synthesize new data points:
\begin{lstlisting}
var modelGen = function() {
   var theta_x = simplex(modelParam({dims: [nComps-1, 1], name: 'theta_x'}));
   var params_y = [
      {mu: modelParam({name: 'mu1'}), sigma: softplus(modelParam({name: 's1'}))},
      {mu: modelParam({name: 'mu2'}), sigma: softplus(modelParam({name: 's2'}))},
      {mu: modelParam({name: 'mu3'}), sigma: softplus(modelParam({name: 's3'}))}
   ];
   var x = sample(Discrete({ps: theta_x}));
   return sample(Gaussian(params_y[x]));
};

sample(Infer({method: 'forward', guide: true, params: params}, modelGen));
\end{lstlisting}

\subsection{Further Optimization}
\label{sec:furtherOptim}

A set of learned parameters \ic{params} can also be passed back into \ic{Optimize} for further optimization:
\begin{lstlisting}
var newParams = Optimize(model, {
   steps: 100,
   optMethod: 'adam',
   params: params
});
\end{lstlisting}
This can be useful for e.g. fine-tuning existing parameters for a new dataset. Indeed, \ic{model} does not even need to be the same program that was originally used to learn \ic{params}; it just needs to declare some parameters (via \ic{param}) with the same names as parameters in \ic{params}. This can be useful for, for example, making a modification to an existing model without having to re-train its guide program from scratch, or for bootstrap training from a simpler model to a more complex one.

\section{Experiments}
\label{sec:results}

Having detailed how to specify and optimize guide programs in our system, in this section, we experimentally evaluate how well programs written in our system can learn generative models and approximate posterior samplers. Unless stated otherwise, we use the following settings for the experiments in this section:
\begin{itemize}
\item{The Adam optimization method~\cite{Adam} with $\alpha = 0.1$, $\beta_1 = 0.9$, and $\beta_2 = 0.999$.}
\item{One sample from $\guide$ per optimization step to estimate the expectation in Equation~\ref{eq:finalEstimator}.}
\end{itemize}

\subsection{Gaussian Mixture Model}
\label{sec:results_gmm}

We first consider the simple Gaussian mixture model program from Figure~\ref{fig:bn_oneLatent} Bottom. This program samples discrete random choices, so its gradient estimator will include an LR term. Alternatively, we could re-write the program slightly to explicitly marginalize out the discrete random choices; see Appendix~\ref{sec:appendix_code:gmmSumOut}.
Marginalizing out of these choices leads to a tighter bound on the marginal log likelihood, so we would expect this version of the program to achieve a higher ELBo.
As an extra benefit, the gradient estimator for this program then reduces to the PW estimator, which will have lower variance. These benefits come at the cost of amortized inference, however, as this version of the program does not have a guide which can predict the latent cluster assignment given an observed point. We also consider a non-amortized, mean field version of the program for comparison.

Figure~\ref{fig:gmmResults} illustrates the performance of these programs after training for 200 steps on a synthetic datset of 100 points. On the left, we show how the ELBo changes during optimizaton.
As expected, ELBo progress asymptotes at a higher value for the marginalized model.
On the right, we show the estimated negative log likelihood of a separate synthetic test set under each program after optimization. Here, we also include the true model (i.e. the model used to synthesize the test/training data) for comparsion.
As suggested by its optimization performance, the model with discrete choices marginalized out performs best.
Note that the amortized guide program slightly out-performs the mean field guide program, indicating that the generalization provided by amortization has benefits for training generative models, in addition to enabling fast predictions of latent variables for previously-unseen observations.

\begin{figure}[!ht]
\begin{minipage}{0.5\linewidth}
\centering
\includegraphics[width=\linewidth]{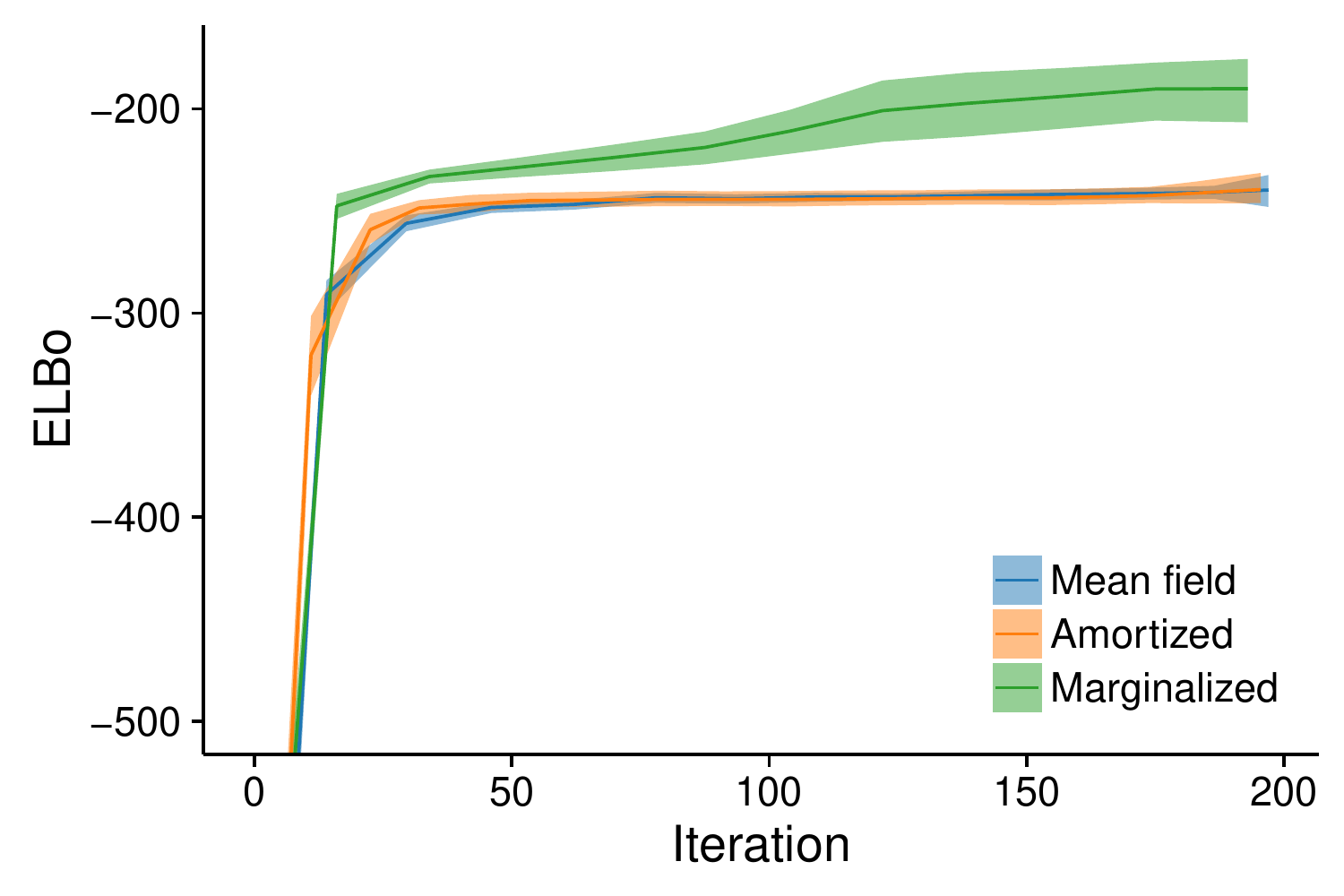}
\end{minipage}
\begin{minipage}{0.5\linewidth}
\centering
\includegraphics[width=\linewidth]{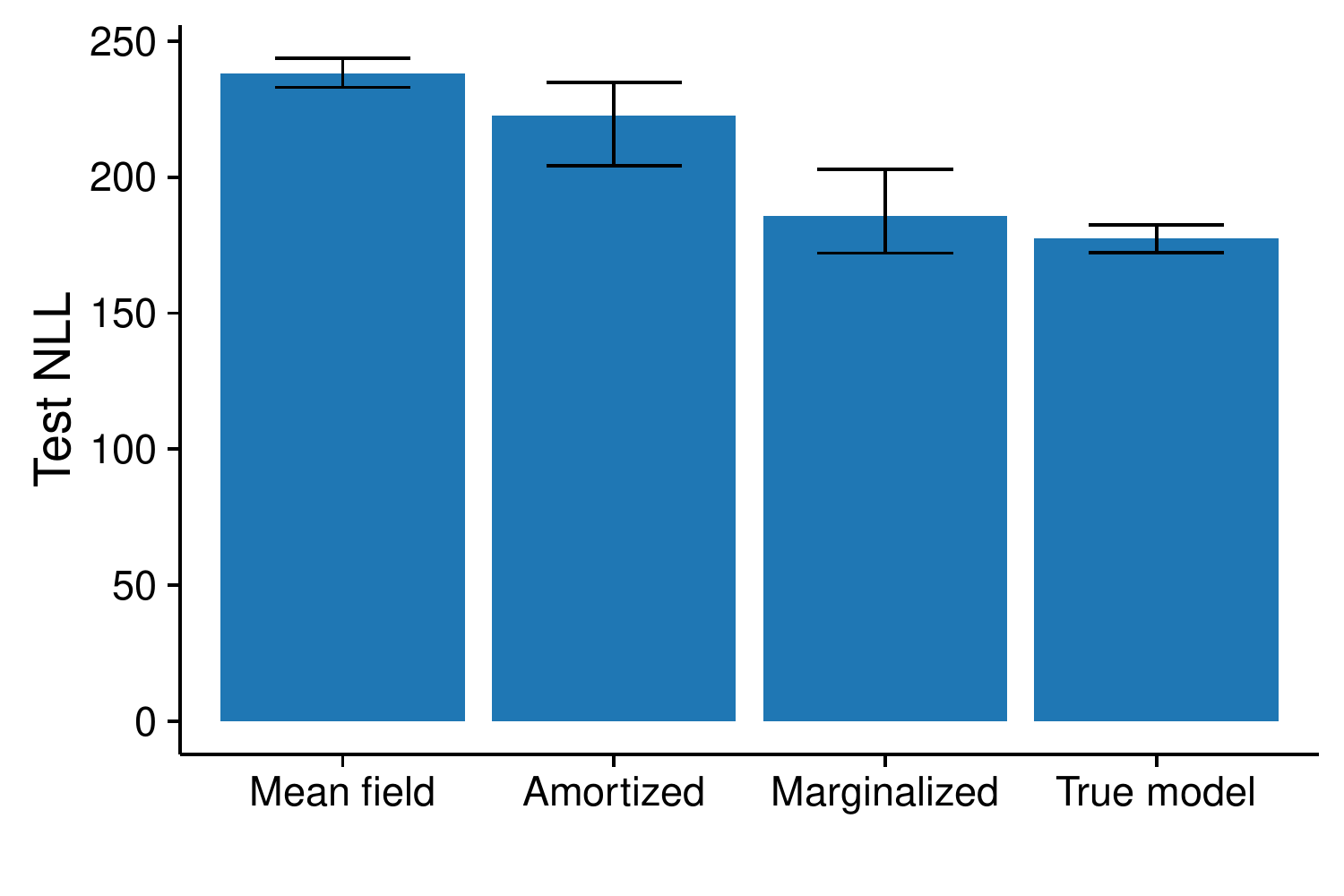}
\end{minipage}
\caption{Performance of simple Gaussian mixture model program. \emph{(Left)} ELBo optimization progress during training. The optimization objective for the marginalized model is a tighter bound on the marginal log likelihood and thus has a higher asymptote. \emph{(Right)} Negative log-likelihood of a held-out test set.}
\label{fig:gmmResults}
\end{figure}


\subsection{QMR-DT}
\label{sec:results_qmr}

We next consider a more complicated Bayesian network model based on the QMR-DT medical diagnosis network~\cite{QMR}. QMR-DT is a bipartite graphical model with one layer of nodes corresponding to latent causes (e.g. diseases, in the medical setting) and a second layer of observed effects (e.g. symptoms). All nodes are binary (i.e. Bernoulli), and the cause nodes are connected to the effects via directed noisy-or links. Appendix~\ref{sec:appendix_code:qmr} shows our implementation.

Our amortized guide program for this model uses a neural network to jointly predict the probabilities of all latent cause variables given a set of observed effects. Since the QMR-DT model contains a large number of discrete random variables, we expect the variance reduction strategies introduced in Section~\ref{sec:optimization:unifiedEstimator} to have significant effect.
Thus, we consider training this guide program with no variance reduction (\emph{Amortized}, step size $10^{-5}$), with per-choice likelihood ratio weights, (\emph{+ local weights}, step size $10^{-3}$), and with both per-choice weights and baselines (\emph{+ baselines}, step size $10^{-2}$). As a point of reference, we also include a mean field model (step size $10^{-2}$) which uses all variance reduction strategies.
Data for our experiments is sampled from a randomly-generated graph with 200 causes and 100 effects. We sample 1000 observations for the training set and an additional 100 for a held-out test set.

Figure~\ref{fig:qmrResults} shows the results of our experiments. The left plot shows optimization progress under each condition. Without any of the variance reduction strategies, gradients are extremely noisy and optimization makes almost no progress. Using local, per-variable likelihood ratio weights allows optimization to mke progress, and adding per-variable baselines further boosts performance. Though it uses all variance reduction strategies, the mean field model trains significantly more slowly than the variance-reduced amortized models. This happens because the mean field model has separate parameters for each training observation, rather than a single parameter set shared by a neural network, i.e. it has many more parameters that are each updated by very few gradient steps. Amortization thus both facilitates fast posterior prediction and exhibits faster training due to parameter sharing.

We next evaluate the guide program's posterior prediction ability. We use the learned guide to sample latent causes given an observed set of effects from the test set, sample effects given those causes, and then record what percentage of active effects in the test set observation are correctly predicted by the effects `hallucinated' from our model. Specifically, if $\vecstyle{e}$ is a vector of effect variables of length $N$, then the metric we use is:
\begin{align*}
&\min(F(\vecstyle{e}_{\text{true}}, \vecstyle{e}_{\text{sampled}}), F(\vecstyle{e}_{\text{sampled}}, \vecstyle{e}_{\text{true}}))\\
&F(\vecstyle{e}_1, \vecstyle{e}_2) = \frac{1}{\sum_i^N \vecstyle{e}_1(i)} \sum_{i, \vecstyle{e}_1(i) = 1} \indicator{\vecstyle{e}_2(i) = 1}
\end{align*}
where $\vecstyle{e}_{\text{true}}$ are effects from the test set and $\vecstyle{e}_{\text{sampled}}$ are hallucinated from the model.
Figure~\ref{fig:qmrResults} Right plots the average $F$ score over 100 runs, where we compare our amortized guide program against using the prior program to sample latent causes. The learned guide program correctly predicts more than twice as many active effects.

\begin{figure}[!ht]
\begin{minipage}{0.6\linewidth}
\centering
\includegraphics[width=\linewidth]{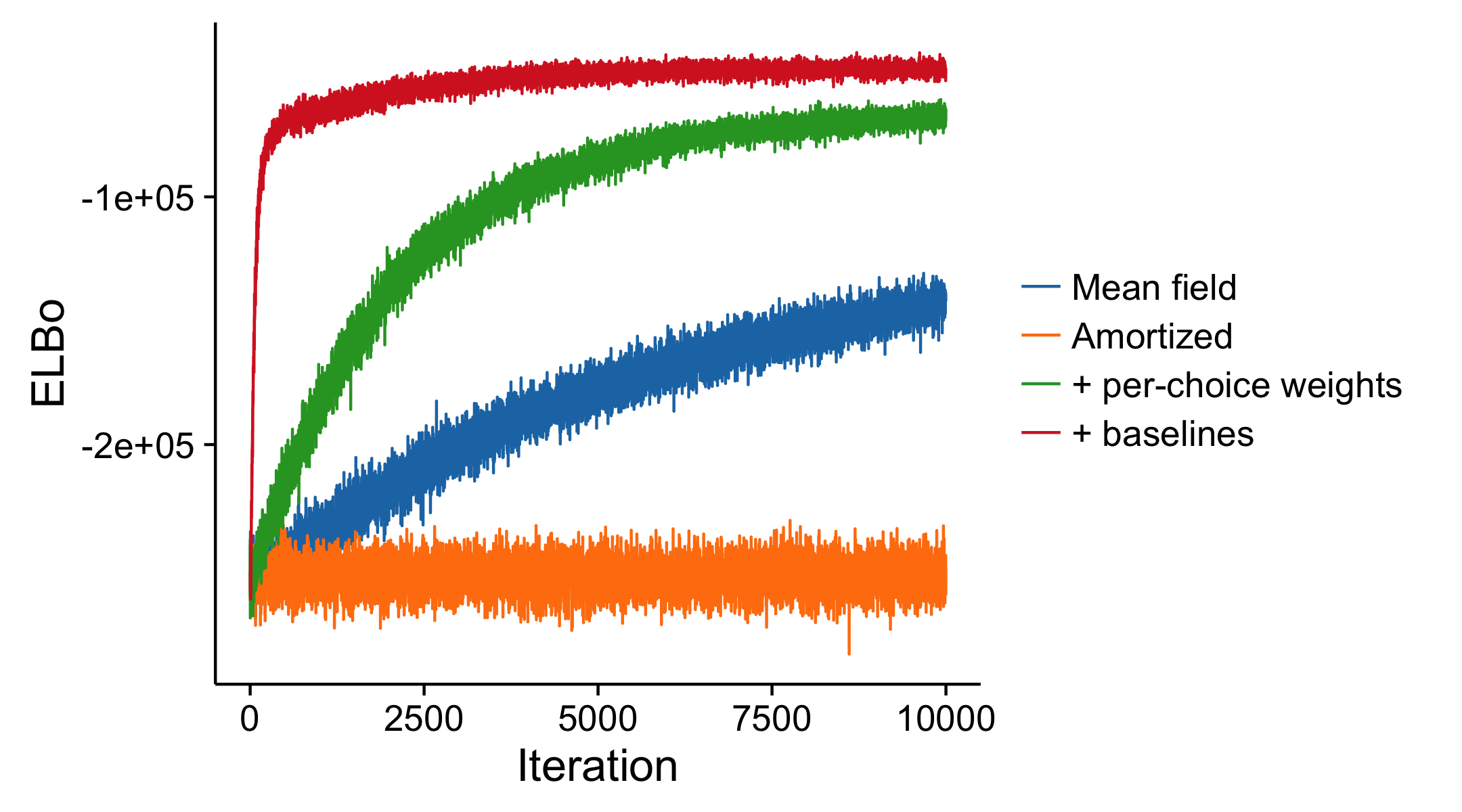}
\end{minipage}
\begin{minipage}{0.4\linewidth}
\centering
\includegraphics[width=\linewidth]{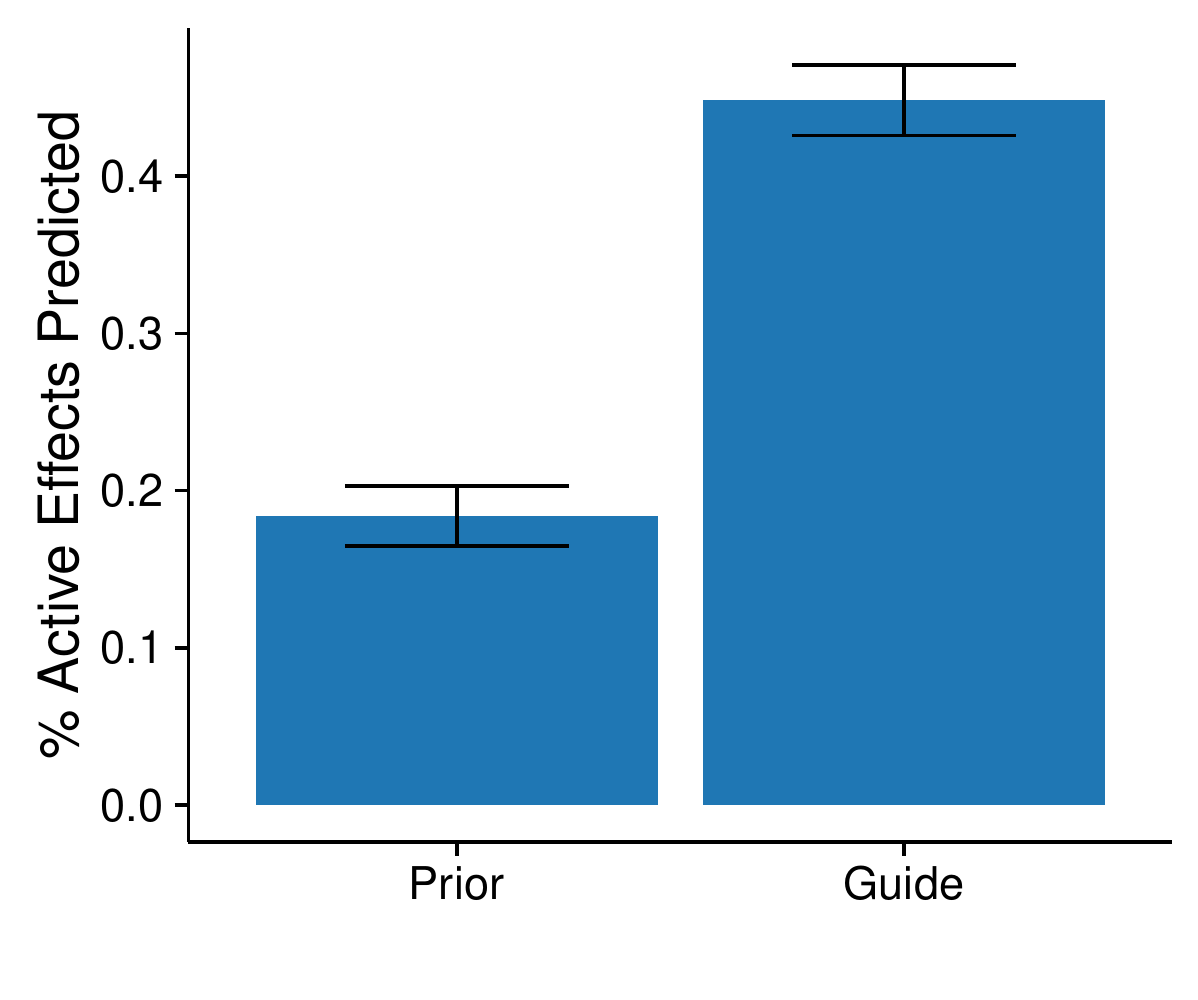}
\end{minipage}
\caption{Performance of a QMR-DT model. \emph{(Left)} ELBo optimization progress during training. \emph{(Right)} Percentage of test set active effects correctly predicted using latent causes sampled from either the prior or the guide program.}
\label{fig:qmrResults}
\end{figure}

In addition to the guide program described above, which predicts all latent causes jointly given the observed effects, we also experimented with `factored' guide programs which predict each latent cause one-by-one given the observed effects. We consider a guide that predicts each latent cause independently (\emph{Factored}), as well as a guide that introduces dependencies between all latent causes via a recurrent neural network (\emph{Factored + GRU}). The recurrent network receives as input the value of each latent cause as it is sampled, maintaining a persistent hidden state  that, in theory, can capture the values of all latent causes sampled thus far. We use the gated recurrent unit (GRU) architecture for its ability to capture a longer-range dependencies~\cite{GRU}, with a hidden state of dimension 20. The code for these programs is shown in Appendix~\ref{sec:appendix_code:qmr}.
We use stepsize 0.01 for both during optimization.

Figure~\ref{fig:qmrResults_factored} compares these guide programs with the joint guide used earlier in this section.
While the independent factored guide performs slightly less well than the joint guide, adding the recurrent neural network to capture posterior dependencies improves performance to slightly better than the joint guide.
One caveat is that the \emph{Factored + GRU} guide takes significantly longer to train in our implementation due to the recurrent network computation.
Later in the paper, we discuss how this guide structure might point the way toward automatically deriving guide programs.

\begin{figure}[!ht]
\begin{minipage}{0.5\linewidth}
\centering
\includegraphics[width=\linewidth]{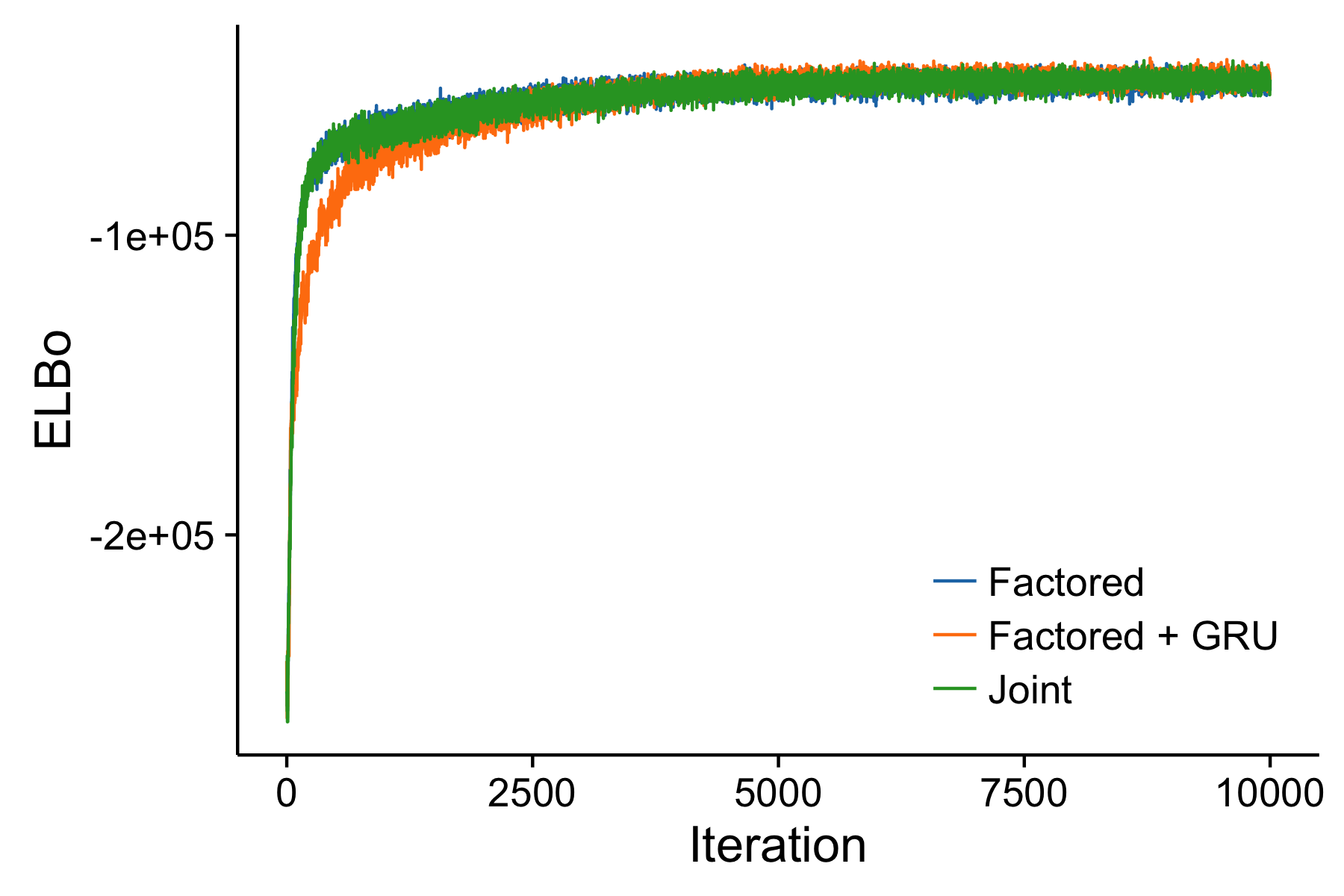}
\end{minipage}
\begin{minipage}{0.5\linewidth}
\centering
\includegraphics[width=\linewidth]{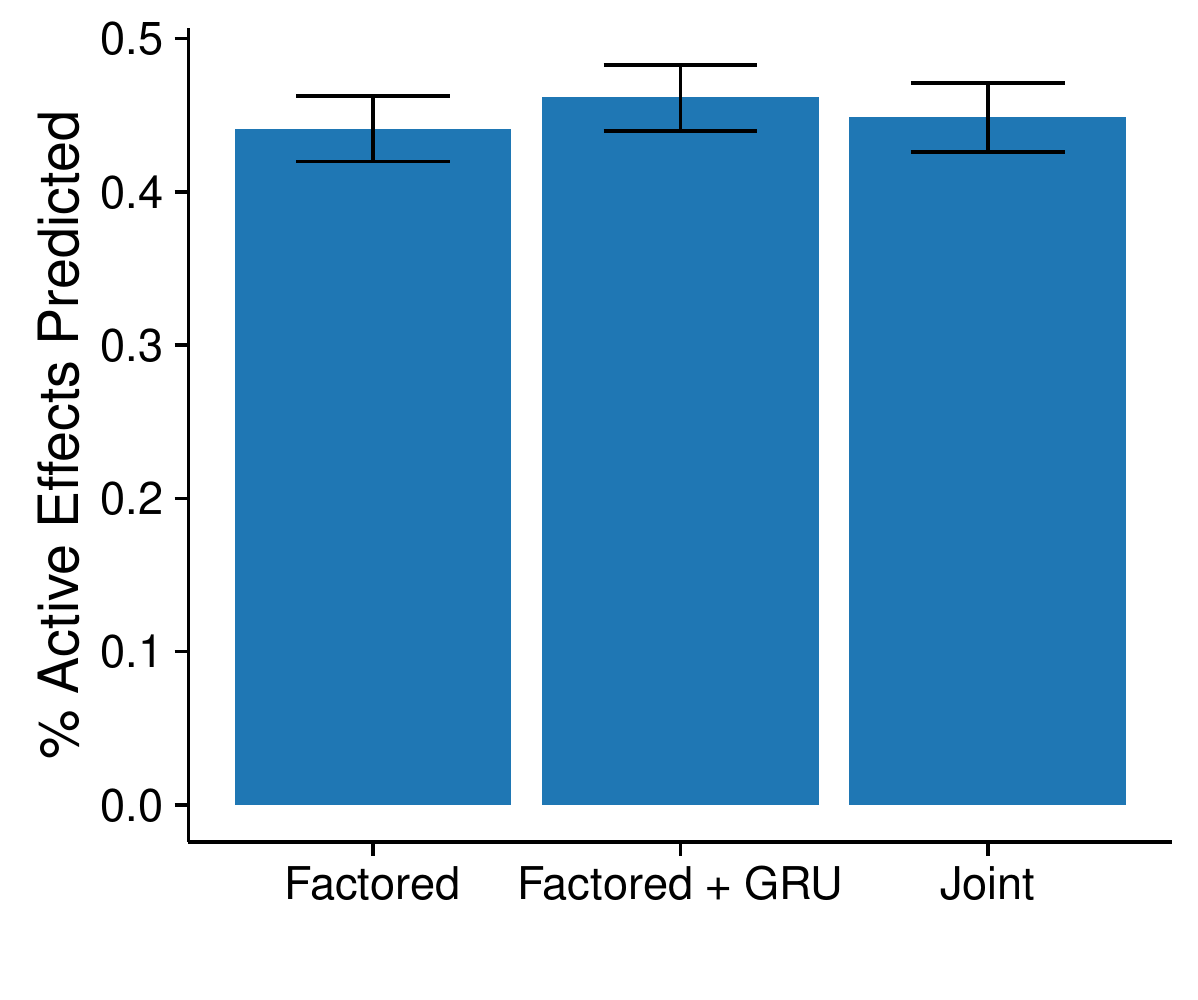}
\end{minipage}
\caption{Experimenting with factored guide programs for QMR-DT. \emph{(Left)} ELBo optimization progress during training. \emph{(Right)} Percentage of test set active effects correctly predicted using latent causes sampled from either the prior or the guide program.}
\label{fig:qmrResults_factored}
\end{figure}

%
%

\subsection{Latent Dirichlet Allocation}
\label{sec:results_lda}

We also used our system to implement amortized inference in Latent Dirichlet Allocation topic models, over a data set of abstracts taken from the Stanford Computation and Cognition Lab's publication page.\footnote{Thanks to Robert Hawkins for creating this dataset.}

We experimented with two different amortized guide programs.
The first is local to each word in a document (\emph{Word-level guide}): it learns to predict the latent topic for each word, given the word and the latent topic distribution for the document.
The second is local to each document (\emph{Document-level guide}): it learns to predict the latent topic distribution for the document, given the words in the document and the latent word distributions for each topic.
These two guides support amortized inference at different granularities and thus have different parameter sharing characteristics, which may lead to different learning behavior.
For comparison, we also included two non-amortized conditions: a mean field model, and a mean field model with the latent choice of topic per word marginalized out (\emph{Marginalized mean field}).
Code for all of these programs can be found in Appendix~\ref{sec:appendix_code:lda}.
We use five topics and learning rate 0.01 in all experiments.

Figure~\ref{sec:results_lda} shows the results of these experiments.
In the optimization progress plot on the left, we see that the marginalized mean field model achieve the highest ELBo.
This is consistent with the results from the Gaussian mixture model experiments: marginalizing out latent variables when possible leads to a tighter bound on the marginal log likelihood.

Of our two amortized guides, the word-level guide performs better (and nearly as well as the marginalized model), likely due to increased parameter sharing. The document-level guide performs at least as well as the mean field model while also being able to efficiently predict topic distributions for previously-unseen documents.

Figure~\ref{sec:results_lda} Right shows the top ten highest probability words in each inferred topic for the marginalized mean field model. From left to right, these topics appear to be about experiments, pragmatic language models, knowledge acquisition, probabilistic programming languages, and a grab bag of remaining topics.

\begin{figure}[!ht]
\begin{minipage}{0.5\linewidth}
\centering
\includegraphics[width=\linewidth]{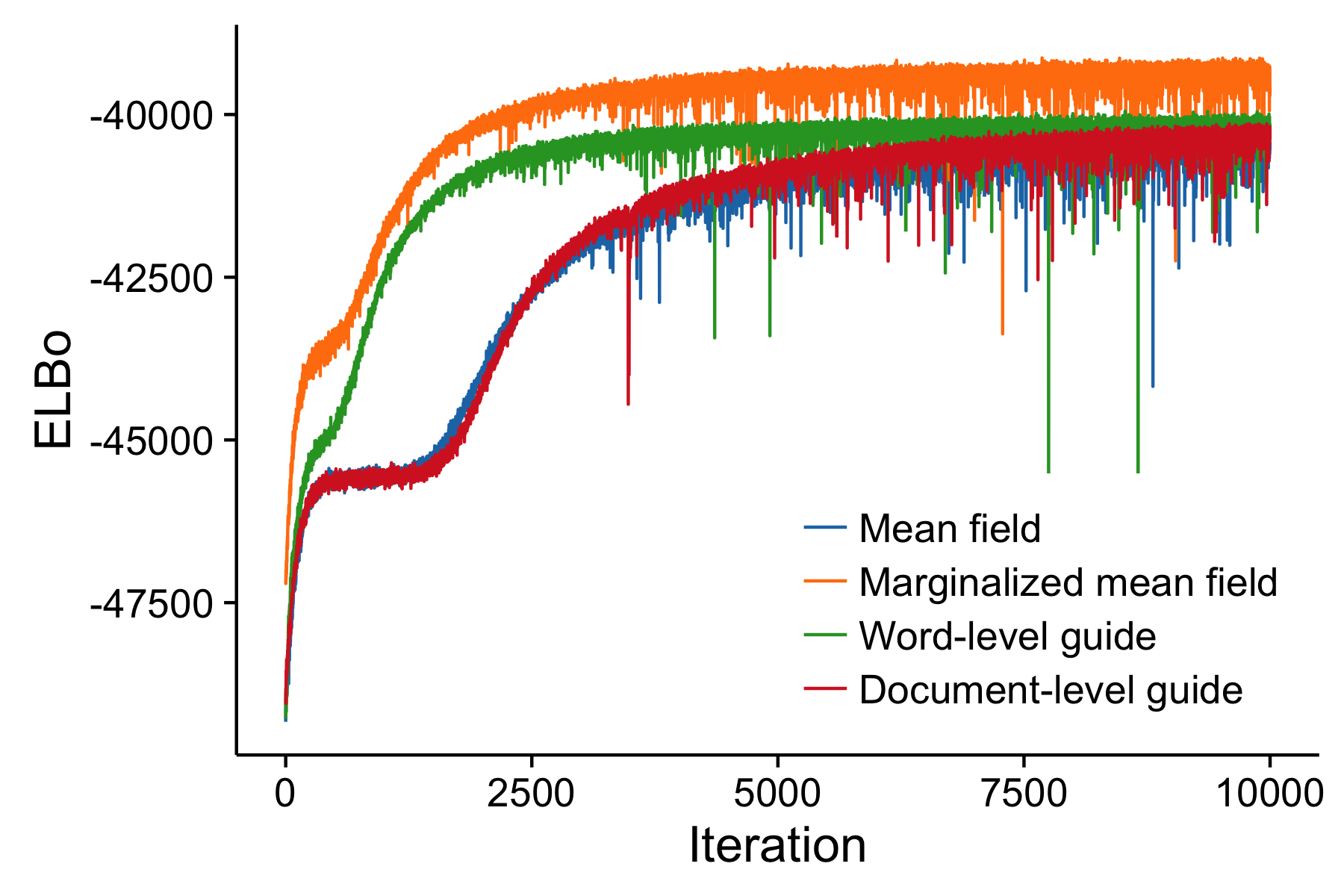}
\end{minipage}
\begin{minipage}{0.5\linewidth}
\tiny
\centering
\begin{tabular}{c c c c c}
\textbf{Topic 1} & \textbf{Topic 2} & \textbf{Topic 3} & \textbf{Topic 4} & \textbf{Topic 5}\\
model & model & loop & probabilistic & attempt\\
causal & language & learning & models & inference\\
dynamic & pragmatic & abstract & programming & novel\\
judgments & abstract & extension & church & decisions\\
evidence & interpretation & bayesian & abstract & required\\
abstract & inference & knowledge & using & models\\
actions & knowledge & human & inference & behavior\\
pedagogical & words & causal & stochastic & bayesian\\
people & form & theories & language & cognition\\
participants & using & theory & model & expressive
\end{tabular}
\end{minipage}
\caption{Performance of Latent Dirichlet Allocation models. \emph{(Left)} ELBo optimization progress during training. The optimization objective for the marginalized model is a tighter bound on the marginal log likelihood and thus has a higher asymptote. \emph{(Right)} Top ten highest probability words in each inferred topic for the best-performing model.}
\label{fig:ldaResults}
\end{figure}

\subsection{Neural Generative Models: Variational Autoencoder \& Sigmoid Belief Network}
\label{sec:results_vae}

Our system naturally supports generative models which use neural network components. Two prominent examples of models in this class include the Variational Autoencoder (VAE) and Sigmoid Belief Networks (SBN). Both models sample latent variables from a multivariate distribution and then transform the result via a neural network to produce observed variables, often in the form of an image. The VAE uses a latent multivariate Gaussian distribution, whereas the SBN uses a latent multivariate Bernoulli.

Appendix~\ref{sec:appendix_code} shows implementations of these models in our system. Our VAE implementation follows the original description of the model by Kingma and Welling~\cite{AEVB}, and our SBN implementation follows that of Mnih and Gregor~\cite{NVIL}.
The VAE uses a 20-dimensional latent code, and the SBN uses a single layer of 200 hidden variables. Our system cannot express the two-layer SBN of Mnih and Gregor, because its guide model samples the latent variables in the reverse order of the generative model.

Figure~\ref{fig:vae_sbn_results} Left shows results of training these models on the MNIST dataset, using Adam with a step size of 0.001.
While both models train quickly at first, the SBN's training slows more noticeably than the VAE's due to its discrete nature. It takes more than three times as many iterations to achieve the same ELBo.
In Figure~\ref{fig:vae_sbn_results} Right, we qualitatively evaluate both models by using them to reconstruct images from the MNIST test set. We use the guide program to sample latent variables conditional on the images in the ``Target'' column (i.e. the `encoder' phase). We then transform these latent variables using the generative model's neural networks (i.e. the `decoder' phase) to produce the reconstructed images in the ``VAE'' and ``SBN'' columns.
As suggested by their training behavior, the VAE is able to generate higher-quality reconstructions after less training.

Our optimization exhibits some differences from the previous work.
For the VAE, Kingma and Welling~\cite{AEVB} exploit the closed-form solution of the KL divergence between two Gaussians to create an even lower-variance estimator of the ELBo gradient. We use a more general formulation, but our system can still successfully train the model.
For the SBN, Mnih and Gregor~\cite{NVIL} use neural networks to compute the per-variable baselines $b_i$ in Equation~\ref{eq:finalEstimator}, whereas we use a simpler approach (see Appendix~\ref{sec:appendix_proofs}).
However, the key point is that each of these models was described in a simple WebPPL program with neural guides and optimized by the default system, without the need for additional implementation efforts.

\begin{figure}
\begin{minipage}{0.5\linewidth}
\centering
\includegraphics[width=\linewidth]{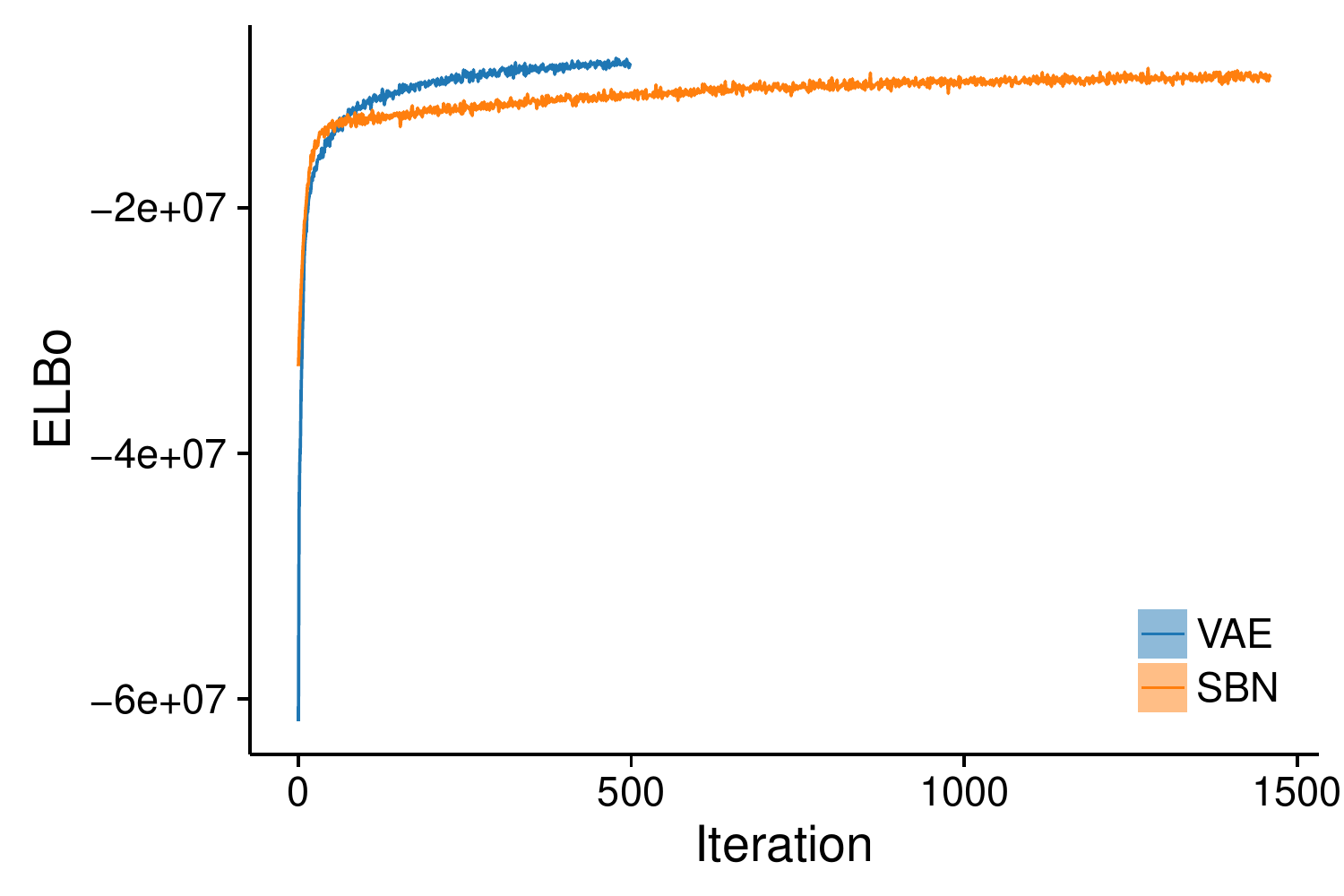}
\end{minipage}
\hspace{2em}
\begin{minipage}{0.5\linewidth}
\setlength{\tabcolsep}{1pt}
\centering
\begin{tabular}{c  c c c c c c}
Target & \multicolumn{3}{c}{VAE} & \multicolumn{3}{c}{SBN}
\\
 \includegraphics[width=0.12\linewidth]{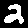}
 \hspace{3pt}
& \includegraphics[width=0.12\linewidth]{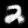}
& \includegraphics[width=0.12\linewidth]{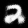}
& \includegraphics[width=0.12\linewidth]{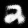}
\hspace{3pt}
& \includegraphics[width=0.12\linewidth]{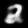}
& \includegraphics[width=0.12\linewidth]{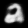}
& \includegraphics[width=0.12\linewidth]{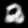}
\\
 \includegraphics[width=0.12\linewidth]{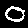}
 \hspace{3pt}
& \includegraphics[width=0.12\linewidth]{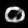}
& \includegraphics[width=0.12\linewidth]{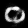}
& \includegraphics[width=0.12\linewidth]{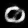}
\hspace{3pt}
& \includegraphics[width=0.12\linewidth]{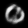}
& \includegraphics[width=0.12\linewidth]{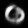}
& \includegraphics[width=0.12\linewidth]{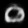}
\\
 \includegraphics[width=0.12\linewidth]{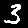}
 \hspace{3pt}
& \includegraphics[width=0.12\linewidth]{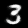}
& \includegraphics[width=0.12\linewidth]{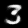}
& \includegraphics[width=0.12\linewidth]{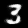}
\hspace{3pt}
& \includegraphics[width=0.12\linewidth]{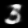}
& \includegraphics[width=0.12\linewidth]{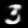}
& \includegraphics[width=0.12\linewidth]{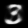}
\\
 \includegraphics[width=0.12\linewidth]{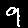}
 \hspace{3pt}
& \includegraphics[width=0.12\linewidth]{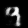}
& \includegraphics[width=0.12\linewidth]{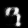}
& \includegraphics[width=0.12\linewidth]{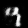}
\hspace{3pt}
& \includegraphics[width=0.12\linewidth]{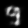}
& \includegraphics[width=0.12\linewidth]{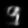}
& \includegraphics[width=0.12\linewidth]{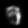}
\end{tabular}
\end{minipage}
\caption{Evaluating the Variational Autoencoder (VAE) and Sigmoid Belief Network (SBN) programs on the MNIST dataset. \emph{(Left)} ELBo optimization progress during training. \emph{(Right)} Reconstructing the images in the ``Target'' column using both models.}
\label{fig:vae_sbn_results}
\end{figure}

%

\section{Deriving Guides Automatically}
\label{sec:autoGuide}

Thus far, we have shown how we can succesfully create and train guide programs for several types of generative models. However, writing guide programs can sometimes be tedious and repetitive; for example, note the large amount of shared structure between the guides shown in Figures~\ref{fig:bn_oneLatent} and \ref{fig:bn_twoLatent}. Furthermore, it is not always obvious how to write a good guide program. In Figure~\ref{fig:bn_twoLatent}, knowledge of the structure of this very simple generative model led us to add a direct dependency between the two latent variables in the guide. For general programs---especially large, complex ones---it will not always be clear what these dependencies are or how to capture them with a guide.

This section describes our early experience with automatically deriving guide programs. We first describe how our system provides sensible default behavior that can make writing some guides less cumbersome. We then outline how the system might be extended to automatically derive guides for any program using recurrent neural networks.

\subsection{Mean Field by Default}
\label{sec:autoGuide:meanField}

If a call to \ic{sample} is not provided with an explicit guide distribution, our system automatically inserts a mean field guide. For example, the code \ic{sample(Gaussian(\{mu: 0, sigma: 1\}))} results in:
\begin{lstlisting}
sample(Gaussian({mu: 0, sigma: 1}), {
   guide: Gaussian({mu: param({name: <@\hilite{<auto_name>}@>}), sigma: softplus(param({name: <@\hilite{<auto_name>}@>}))})
})
\end{lstlisting}
where parameter bounding transforms such as \ic{softplus} are applied based on bounds metadata provided with each primitive distribution type. We use reparameterizable guides for continuous distributions (see Appendix~\ref{sec:appendix_reparam}).

Since this process declares new optimizable parameters automatically, we must automatically generate names for these parameters. Our system names parameters according to where they are declared in the program execution trace, using the same naming technique as is used for random choices in probabilistic programming MCMC engines~\cite{Lightweight}. Since the names of these parameters are tied to the structure of the program, they cannot be re-used by other programs (as in the `Further Optimization' example of Section~\ref{sec:furtherOptim}).

\subsection{Beyond Mean Field: Automatic Factored Guides with Recurrent Networks}

In Section~\ref{sec:results_qmr}, we experimented with a factored guide program for the QMR–DT model. We think that this general style of guide---predicting each random choice in sequence, conditional on the hidden state of a recurrent neural network---might be generalized to an automatic guide for any program, as any probabilistic program can be decomposed into a sequence of random choices. In our QMR-DT experiments, we used a separate neural network (with separate parameters) to predict each latent variable (i.e. random choice). For complex models and large data sets, this approach would lead to a computationally unfeasible explosion in the number of parameters. Furthermore, it is likely that the prediction computations for many random choices in the program are related. For example, in the QMR-DT program, latent causes that share many dependent effects may be well predicted by the same or very similar networks.

Given these insights, we imagine a universally-applicable guide that uses a single prediction network for all random choices, but to which each random choice provides an additional identifying input. These IDs should be elements in a vector space, such that more `similar' random choices have IDs which are close to one another for some distance metric in the vector space. One possible way to obtain such IDs would be to learn an embedding of the program-structural addresses of each random choice~\cite{Lightweight}. These might be learned in an end-to-end fashion by making them learnable parameter vectors in the overall variational optimization (i.e. letting closeness in the embedding space be an emergent property of optimizing our overall objective).






\section{Conclusion}
\label{sec:conclusion}

In this paper, we presented a system for amortized inference in probabilistic programs.
Amortization is achieved through parameterized \emph{guide programs} which mirror the structure of the original program but can be trained to approximately sample from the posterior.
We introduced an interface for specifying guide programs which is flexible enough to reproduce state-of-the-art variational inference methods.
We also demonstrated how this interface supports model learning in addition to amortized inference.
We developed and proved the correctness of an optimization method for training guide programs, and we evaluated its ability to optimize guides for Bayesian networks, topic models, and deep generative models.

\subsection{Future Work}
\label{sec:conclusion_futureWork}

There are many exciting directions of future work to pursue in improving amortized inference for probabilistic programs. The system we have presented in this paper provides a platform from which to explore these and other possibilities:

\paragraph{More modeling paradigms}
In this paper, we focused on the common machine learning modeling paradigm in which a global generative model generates many IID data points.
There are many other modeling paradigms to consider.
For example, time series data is common in machine learning applications. Just as we developed \ic{mapData} to facilitate efficient inference in IID data models, we might develop an analogous data processing function for time series data (i.e. \ic{foldData}). Using neural guides with such a setup would permit amortized inference in models such as Deep Kalman Filters~\cite{DeepKalmanFilters}.
In computer vision and computer graphics, a common paradigm for generative image models is to factor image generation into multiple steps and condition each step on the partially-generated image thus far~\cite{PixelCNN,NGPM}.
Such `yield-so-far' models should also be possible to implement in our system.

\paragraph{Better gradient estimators}
While the variance reduction strategies employed by our optimizer make inference with discrete variables tractable, it is still noticeably less efficient then with purely continuous models.
Fortunately, there are ongoing efforts to develop better, general-purpose discrete estimators for stochastic gradients~\cite{MuProp,VIMCO}.
It should be possible to adapt these methods for probabilistic programs.

\paragraph{Automatic guides}
As discussed in Section~\ref{sec:autoGuide}, we believe that automatically deriving guide programs using recurrent neural networks may soon be possible.
Recent enhancements to recurrent networks may be necessary to make this a reality.
For example, the external memory of the Neural Turing Machine may be better at capturing certain long-range posterior dependencies~\cite{NTM}.
We might also draw inspiration from the Neural Programmer-Interpreter~\cite{NPI}, whose stack of recurrent networks which communicate via arguments might better capture the posterior dataflow of arbitrary programs.

\paragraph{Other learning objectives}
In this paper, we focused on optimizing the ELBo.
If we flip the direction of KL divergence in Equation~\ref{eq:elbo}, the resulting functional is an \emph{upper} bound on the log marginal likelihood of the data---an `Evidence Upper Bound,' or EUBo.
Computing the EUBo and its gradient requires samples from the true posterior and is thus unusable in many applications, where the entire goal of amortized inference is to find a way to tractably generate such samples.
However, some applications can benefit from it, if the goal is to speed up an existing tractable inference algorithm (e.g. SMC~\cite{NGPM}), or if posterior execution traces are available through some other means (e.g. input examples from the user).
There may also be less extreme ways to exploit this idea for learning.
For example, in a \ic{mapData}-style program, we might interleave normal ELBo updates with steps that hallucinate data from the posterior predictive (using a guide for global model parameters) and train the local guide to correctly parse these `dreamed-up' examples. Such a scheme bears resemblance to the wake-sleep algorithm~\cite{WakeSleep}.

\paragraph{Control flow}
While our system's one-to-one mapping between random choices in the guide and in the target program makes the definition and analysis of guides simple, there are scenarios in which more flexibility is useful.
In some cases, one may want to insert random choices into the guide which do not occur in the target program (e.g. using a compound distribution, such as a mixture distribution, as a guide).
And for models in which there is a natural hierarchy between the latent variables and the observed variables, having the guide run `backwards' from the observed variables to the top-most latents has been shown to be useful~\cite{StochasticInverses,NeuralStochasticInverses,NVIL}.
It is worth exploring how to support these (and possibly even more general) control flow deviations in a general-purpose probabilistic programming inference system.


\section*{Acknowledgments}

This material is based on research sponsored by DARPA under agreement number FA8750-14-2-0009. The U.S. Government is authorized to reproduce and distribute reprints for Governmental purposes notwithstanding any copyright notation thereon. The views and conclusions contained herein are those of the authors and should not be interpreted as necessarily representing the official policies or endorsements, either expressed or implied, of DARPA or the U.S. Government.

\bibliographystyle{plain}
\bibliography{main}

\appendix

\section{Appendix: Reparameterizations}
\label{sec:appendix_reparam}

Examples of primitive random choice distributions that can be reparameterized via a location-scale transform:

\begin{center}
\renewcommand{\arraystretch}{1.5}
\begin{tabular}{c | c | c}
\textbf{Distribution} & $\bm{\reparamVar \sim \reparamDist(\cdot)}$ & $\bm{\reparamXform(\reparamVar)}$ \\
\hline
Gaussian($\mu$, $\sigma$) & Gaussian($0$, $1$) & $\mu + \sigma \cdot \reparamVar$ \\
LogitNormal($\mu$, $\sigma$) & Gaussian($0$, $1$) & $\text{sigmoid}(\mu + \sigma \cdot \reparamVar)$ \\
LogisticNormal($\bm{\mu}$, $\bm{\sigma}$) & Gaussian($\bm{0}$, $\bm{1}$) & $\text{simplex}(\bm{\mu} + \bm{\sigma} \cdot \reparamVar)$ \\
InverseSoftplusNormal($\mu$, $\sigma$) & Gaussian($0$, $1$) & $\text{softplus}(\mu + \sigma \cdot \reparamVar)$ \\
Exponential($\lambda$) & Uniform($0$, $1$) & $-\log(\reparamVar) / \lambda$ \\
Cauchy($x_0$, $\gamma$) & Uniform($0$, $1$) & $x_0 + \gamma \cdot \tan( \pi \cdot (\reparamVar - 0.5) )$
\end{tabular}
\end{center}

Examples of primitive distributions that do not have a location-scale transform but can be guided by a reparameterizable approximating distribution:

\begin{center}
\renewcommand{\arraystretch}{1.5}
\begin{tabular}{c | c}
\textbf{Distribution} & \textbf{Guide Distribution} \\
\hline
Uniform & LogitNormal \\
Beta & LogitNormal \\
Gamma & InverseSoftplusNormal \\
Dirichlet & LogisticNormal
\end{tabular}
\end{center}


\section{Appendix: Gradient Estimator Derivations \& Correctness Proofs}
\label{sec:appendix_proofs}

\newtheorem{lemma}{Lemma}
\newtheorem{theorem}{Theorem}

\subsection{Derivation of Unified Gradient Estimator (Equation~\ref{eq:hybridEstimator})}
\label{sec:appendix:estDerivation}

\begin{align}
\gradparams \elbo
&= \gradparams \expect_\reparamDist [ \log p(\xformedVars, \observedVars) - \log \guide(\xformedVars | \observedVars) ]\nonumber\\
&= \gradparams \int_\reparamVars \reparamDist(\reparamVars | \observedVars) ( \log p(\xformedVars, \observedVars) - \log \guide(\xformedVars | \observedVars) )\nonumber\\
&= \int_\reparamVars \gradparams \reparamDist(\reparamVars | \observedVars) ( \log p(\xformedVars, \observedVars) - \log \guide(\xformedVars | \observedVars) ) + \reparamDist(\reparamVars | \observedVars) \gradparams ( \log p(\xformedVars, \observedVars) - \log \guide(\xformedVars | \observedVars) )\nonumber \\
\label{eq:estDerivation_trick}
&= \int_\reparamVars \reparamDist(\reparamVars | \observedVars) \gradparams \log \reparamDist(\reparamVars | \observedVars) ( \log p(\xformedVars, \observedVars) - \log \guide(\xformedVars | \observedVars) ) + \reparamDist(\reparamVars | \observedVars) \gradparams ( \log p(\xformedVars, \observedVars) - \log \guide(\xformedVars | \observedVars) ) \\
&= \expect_\reparamDist [ \gradparams \log \reparamDist(\reparamVars | \observedVars) ( \log p(\xformedVars, \observedVars) - \log \guide(\xformedVars | \observedVars) ) + \gradparams( \log p(\xformedVars, \observedVars) - \log \guide(\xformedVars | \observedVars) )]\nonumber
\end{align}
Line~\ref{eq:estDerivation_trick} makes use of the identity $\nabla f(x) = f(x) \nabla \log f(x)$.

\subsection{Zero Expectation Identities}
\label{sec:appendix:zeroexp}

In what follows, we will make frequent use of the following:

\begin{lemma}
If $f(x)$ is a probability distribution, then:

\begin{equation*}
\expect_f[\nabla \log f(x)] = 0
\end{equation*}
\label{lem:zeroexp}
\end{lemma}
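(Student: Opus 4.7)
The plan is a short calculation using the log-derivative identity and the fact that a probability distribution integrates (or sums) to one. First I would rewrite the expectation as an integral, $\expect_f[\nabla \log f(x)] = \int f(x) \nabla \log f(x)\, dx$. Then I would apply the identity $\nabla \log f(x) = \nabla f(x) / f(x)$ (the same rewrite used in line~\ref{eq:estDerivation_trick} above) so that the two factors of $f(x)$ cancel, leaving $\int \nabla f(x)\, dx$. Next I would interchange the gradient with the integral to obtain $\nabla \int f(x)\, dx$, and conclude with $\int f(x)\, dx = 1$ and $\nabla 1 = 0$. The same chain of equalities works verbatim for a discrete $f$ with the integral replaced by a sum, which is what we need in Section~\ref{sec:optimization:unifiedEstimator} where the lemma is applied to both reparameterized continuous guide factors and to discrete guide factors.

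The only step that is not entirely routine is the exchange of $\nabla$ and $\int$, which requires a dominated-convergence-type regularity assumption on the parameterized family $f$ (with $\nabla$ understood as the gradient in the variational parameters, as elsewhere in the paper). This is a standard condition in the variational inference literature, and it is satisfied by all of the primitive families listed in Appendix~\ref{sec:appendix_reparam}, so I would invoke it without belaboring the hypotheses. With that caveat there is no real obstacle, and the lemma can then be applied directly to the $\gradparams \log \guide$ and $\gradparams \log \reparamDist$ terms that need to be zeroed out when passing from Equation~\ref{eq:hybridEstimator} to the final estimator in Equation~\ref{eq:finalEstimator}.
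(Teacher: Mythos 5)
Your proposal matches the paper's proof essentially verbatim: the same chain $\expect_f[\nabla \log f(x)] = \int_x f(x)\nabla\log f(x) = \int_x \nabla f(x) = \nabla \int_x f(x) = \nabla 1 = 0$. Your added remark about the regularity needed to interchange $\nabla$ and $\int$ is a reasonable caveat that the paper leaves implicit, but it does not change the argument.
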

\begin{proof}
\begin{equation*}
\expect_f[\nabla \log f(x)]
= \int_x f(x) \nabla \log f(x)
= \int_x \nabla f(x)
= \nabla \int_x f(x)
= \nabla 1
= 0
\end{equation*}
\end{proof}

\begin{lemma}
For a discrete random choice $\reparamVar_i$ and a function $f(\reparamVars_{<i}, \observedVars)$:
\begin{equation*}
\expect_\reparamDist [ \gradparams \log \guide(\reparamXform_i(\reparamVar_i) | \reparamXform(\reparamVars_{<i}), \observedVars) f(\reparamVars_{<i}, \observedVars) ] = 0
\end{equation*}
\label{lem:zeroexp2}
\end{lemma}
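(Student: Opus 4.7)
The plan is to reduce Lemma 2 to Lemma 1 via the tower property of expectation, isolating the choice $\epsilon_i$ so that only the inner conditional expectation matters, and then observing that $f(\epsilon_{<i}, y)$ is constant with respect to that inner expectation.

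First I would simplify the integrand using the specification of $r_i$ and $g_i$ for discrete choices given in Section~\ref{sec:optimization:unifiedEstimator}: since $r_i = q_i$ and $g_i$ is the identity for discrete $\epsilon_i$, we have $\log q(g_i(\epsilon_i) \mid g(\epsilon_{<i}), y) = \log r(\epsilon_i \mid \epsilon_{<i}, y)$. This rewrites the quantity of interest as
\begin{equation*}
\mathbb{E}_r\bigl[ \nabla_\phi \log r(\epsilon_i \mid \epsilon_{<i}, y)\, f(\epsilon_{<i}, y) \bigr].
\end{equation*}

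Next I would factor the joint density $r(\epsilon \mid y) = r(\epsilon_{<i} \mid y)\, r(\epsilon_i \mid \epsilon_{<i}, y)\, r(\epsilon_{>i} \mid \epsilon_{\le i}, y)$, which is valid because the guide factors in program order, and apply Fubini to write the expectation as nested expectations $\mathbb{E}_{r(\epsilon_{<i}\mid y)}\mathbb{E}_{r(\epsilon_i \mid \epsilon_{<i}, y)}\mathbb{E}_{r(\epsilon_{>i}\mid \epsilon_{\le i},y)}[\cdot]$. The integrand depends only on $\epsilon_{\le i}$, so the innermost expectation over $\epsilon_{>i}$ is trivial and can be dropped. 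Since $f(\epsilon_{<i}, y)$ does not depend on $\epsilon_i$, I can pull it out of the conditional expectation over $\epsilon_i$, leaving
\begin{equation*}
\mathbb{E}_{r(\epsilon_{<i}\mid y)}\Bigl[ f(\epsilon_{<i}, y)\, \mathbb{E}_{r(\epsilon_i \mid \epsilon_{<i}, y)}\bigl[ \nabla_\phi \log r(\epsilon_i \mid \epsilon_{<i}, y) \bigr] \Bigr].
\end{equation*}

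Finally I would invoke Lemma 1 pointwise in $\epsilon_{<i}$: fixing $\epsilon_{<i}$, the conditional $r(\cdot \mid \epsilon_{<i}, y)$ is itself a probability distribution, so its expected score is zero. The outer expectation of $f(\epsilon_{<i}, y) \cdot 0$ is then zero, which is the claim. The only subtlety worth double-checking is that the discrete nature of $\epsilon_i$ is what makes this valid without any reparameterization caveat: because $g_i$ is the identity, the score we are differentiating is indeed the log-density of the sampling distribution, and the derivative-under-the-integral step implicit in Lemma 1 is justified under the mild regularity we already assume for the LR estimator. I do not anticipate a substantive obstacle; the argument is essentially two lines of bookkeeping plus Lemma 1.
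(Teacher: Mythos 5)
Your argument is correct and is essentially identical to the paper's proof: both rewrite the guide factor as $\gradparams \log \reparamDist(\reparamVar_i \mid \reparamVars_{<i}, \observedVars)$ using the discrete-case identification $\reparamDist_i = \guide_i$ with $\reparamXform_i$ the identity, factor the joint over $\reparamVars_{<i}$, $\reparamVar_i$, and $\reparamVars_{>i}$, integrate out the downstream variables, pull $f(\reparamVars_{<i}, \observedVars)$ outside the inner sum, and apply Lemma~\ref{lem:zeroexp} to the conditional distribution of $\reparamVar_i$. No substantive differences.
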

\begin{proof}
\begin{align*}
\expect_\reparamDist [ \gradparams \log \guide(\reparamXform_i(\reparamVar_i) | \reparamXform(\reparamVars_{<i}), \observedVars) f(\reparamVars_{<i}, \observedVars) ]
&= \int_\reparamVars \reparamDist(\reparamVars | \observedVars) \gradparams \log \guide(\reparamXform_i(\reparamVar_i) | \reparamXform(\reparamVars_{<i}), \observedVars) f(\reparamVars_{<i}, \observedVars)\\
&= \int_\reparamVars \reparamDist(\reparamVars | \observedVars) \gradparams \log \reparamDist(\reparamVar_i | \reparamVars_{<i}, \observedVars) f(\reparamVars_{<i}, \observedVars)\\
&= \int_{\reparamVars_{<i}} \reparamDist(\reparamVars_{<i} | \observedVars) f(\reparamVars_{<i}, \observedVars) \sum_{\reparamVar_i} \reparamDist(\reparamVar_i | \reparamVars_{<i}, \observedVars) \gradparams \log \reparamDist(\reparamVar_i | \reparamVars_{<i}, \observedVars) \int_{\reparamVars_{>i}} \reparamDist(\reparamVars_{>i} | \reparamVars_{\leq i}, \observedVars)\\
&= \int_{\reparamVars_{<i}} \reparamDist(\reparamVars_{<i} | \observedVars) f(\reparamVars_{<i}, \observedVars) \cdot \expect_{\reparamDist} [ \gradparams \log \reparamDist(\reparamVar_i | \reparamVars_{<i}, \observedVars) ] \cdot 1\\
&= \int_{\reparamVars_{<i}} \reparamDist(\reparamVars_{<i} | \observedVars) f(\reparamVars_{<i}, \observedVars) \cdot 0 = 0
\end{align*}
where the last line makes use of Lemma~\ref{lem:zeroexp}.
\end{proof}

\subsection{Variance Reduction Step 1: Zero Expectation $W$ Terms}

In this section, we show that for each random choice $i$, we can remove terms from $W(\reparamVars, \observedVars)$ to produce $w_i(\reparamVars, \observedVars)$. Specifically, we prove that while $w_i(\reparamVars, \observedVars) \neq W(\reparamVars, \observedVars)$, we still have $\expect_\reparamDist [ \gradparams \log \guide(\reparamXform_i(\reparamVar_i) | \reparamXform(\reparamVars_{<i}), \observedVars) w_i(\reparamVars, \observedVars) ] = \expect_\reparamDist [ \gradparams \log \guide(\reparamXform_i(\reparamVar_i) | \reparamXform(\reparamVars_{<i}), \observedVars) W(\reparamVars, \observedVars) ]$.

To enable the computation of each $w_i$, our system builds a directed acyclic dependency graph as the program executes. The graph is constructed as follows:
\begin{itemize}
\item{\textbf{On program start:} Create a root node \ic{root}. Set \ic{prev = root}. This holds the previous node and will be used to assign node parents.}
\item{\textbf{On \ic{sample} or \ic{observe}:} Create a new node \ic{node} representing this random choice/observation. Set \ic{node.parent = prev}. Update \ic{prev = node}.}
\item{\textbf{On \ic{mapData} begin:} Create two new nodes \ic{split} and \ic{join}. These nodes will delimit the beginning and ending of the \ic{mapData} iteration. Push \ic{split, join} onto a stack \ic{mapDataStack}. This stack keeps track of \ic{split, join} nodes when there are nested calls to \ic{mapData}.}
\item{\textbf{On \ic{mapData} iteration begin:} Retrieve \ic{split, join = top(mapDataStack)}. Update \ic{prev = split}. This step reflects the independence assumptions of \ic{mapData}: an iteration of \ic{mapData} does not depend on any previous iterations, so there are no such edges in the graph. Instead, each iteration of \ic{mapData} points back to the beginning of the \ic{mapData} call.}
\item{\textbf{On \ic{mapData} iteration end:} Retrieve \ic{split, join = top(mapDataStack)}. Add \ic{prev} to \ic{join.parents}. This step connects the last random choice in each \ic{mapData} iteration to the \ic{join} node.}
\item{\textbf{On \ic{mapData} end:} Retrieve \ic{split, join = top(mapDataStack)}. Update \ic{prev = join}. This step acknowledges that any subsequent computation may depend on the \ic{mapData} as a whole.}
\end{itemize}
In this graph, all nodes correspond to random choices or observations, except for the special \ic{mapData} nodes \ic{split} and \ic{join}.
When there are no calls to \ic{mapData}, the graph has a linear topology, where each node is connected via a parent edge to the previously-sampled/observed node.
\ic{mapData} calls introduce fanout-fanin subgraphs: the \ic{split} node fans out into separate linear chains for each \ic{mapData} iteration, and the last nodes of these chains fan in to the \ic{join} node. Figure~\ref{fig:graphExample} shows the resulting graph for one execution of a simple example program.

\begin{figure}[!ht]
\begin{minipage}{0.65\linewidth}
\begin{lstlisting}
var data = loadData('data.json');
var model = function() {
   var a = sample(Bernoulli({p: 0.5}));
   mapData({data: data}, function(y) {
      var x = sample(Gaussian({mu: a ? 0 : 5, sigma: 1}));
      observe(Gaussian({mu: x, sigma: 0.5}), y);
   });
   return a;
}
\end{lstlisting}
\end{minipage}
\begin{minipage}{0.35\linewidth}
\centering
\includegraphics[width=\linewidth]{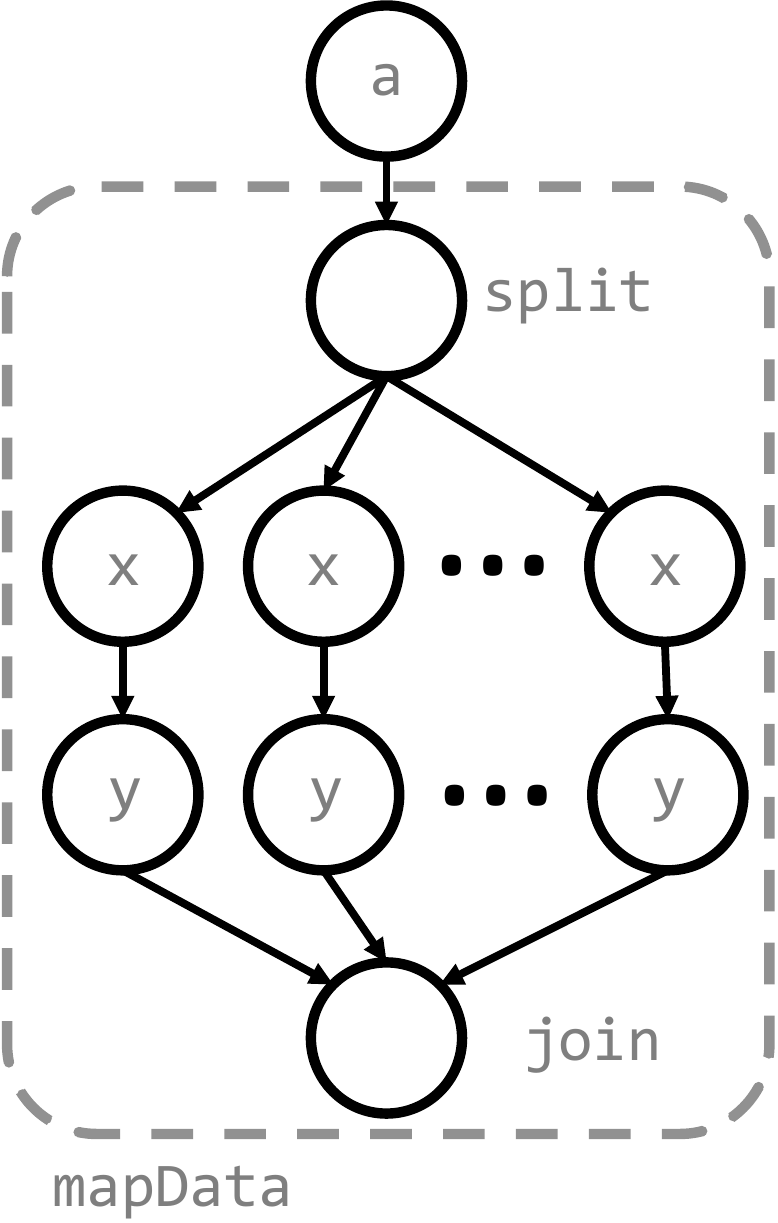}
\end{minipage}
\caption{A conservative dependency graph \emph{(Right)} resulting from one execution through a simple program \emph{(Left)}.}
\label{fig:graphExample}
\end{figure}

By construction, this graph is an overly-conservative dataflow dependency graph: if random choice $\reparamVar_a$ flows into random choice $\reparamVar_b$ (or observation $\observedVar_c$), then a path $\reparamVar_a \to \reparamVar_b$ (or $\reparamVar_a \to \observedVar_c$) exists in the graph. The converse is not necessarily true (i.e. there can exist edges between nodes that have no dataflow dependency). Note also that, by construction, the existence of a path $\reparamVar_a \to \reparamVar_b$ implies that $\reparamVar_b$ was sampled after $\reparamVar_a$ in the program execution order.

From the perpective of a random choice node $\reparamVar_i$, the graph nodes can be partitioned into the following subsets:
\begin{itemize}
\item{$\mathcal{D}_i$: the nodes ``downstream'' from $\reparamVar_i$ (i.e. the set of all nodes $d$ for which an edge $\reparamVar_i \to d$ exists.}
\item{$\mathcal{U}_i$: the nodes ``upstream'' from $\reparamVar_i$ (i.e the set of all nodes $u$ for which an edge $u \to \reparamVar_i$ exists.}
\item{$\mathcal{C}_i$: the set of nodes which are in neither $\mathcal{D}_i$ nor $\mathcal{U}_i$.}
\end{itemize}
Figure~\ref{fig:graphPartitions} illustrates these partitions on an example graph. For convenience, we also define $\mathcal{B}_i \equiv \mathcal{U}_i \cup \mathcal{C}_i$.

\begin{figure}[!ht]
\centering
\input{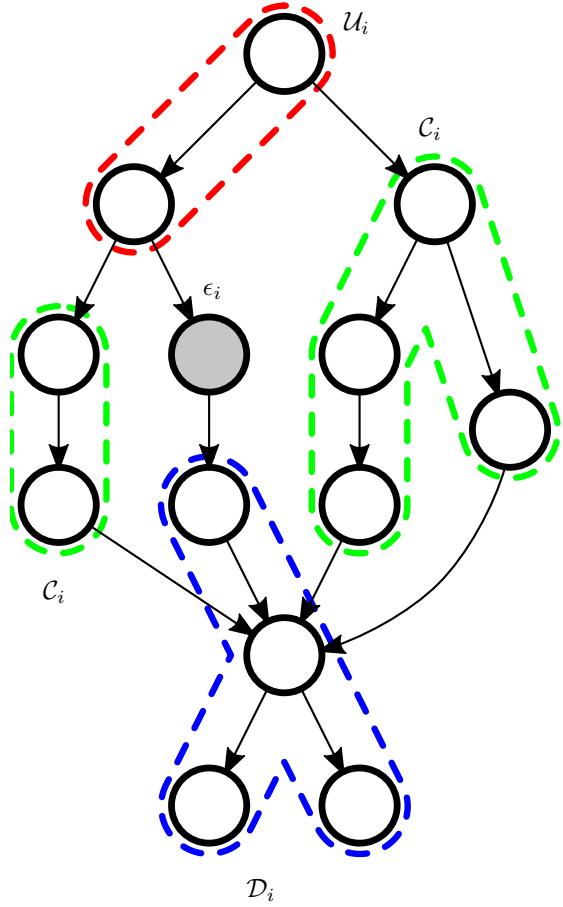}
\caption{Partitioning a dependency graph into multiple node subsets from the perspective of a random variable node $\reparamVar_i$.}
\label{fig:graphPartitions}
\end{figure}

For any given random choice $\reparamVar_i$, these partitions allows us to factor $W(\reparamVars, \observedVars)$ into three terms:
\begin{align*}
W_i(\reparamVars, \observedVars)
&= \log \frac{p(\xformedVars, \observedVars)}{\guide(\xformedVars | \observedVars)}\\
&= \log \frac{p(\mathcal{B}_i)}{\guide(\mathcal{B}_i)} + \log \frac{p( \reparamXform(\reparamVar_i) | \mathcal{B}_i)}{\guide(\reparamXform(\reparamVar_i) | \mathcal{B}_i)}	+ \log \frac{p(\mathcal{D}_i | \reparamVar_i, \mathcal{B}_i)}{\guide(\mathcal{D}_i | \reparamVar_i, \mathcal{B}_i)}\\
&= w_i(\mathcal{B}_i) + w_i(\reparamVar_i, \mathcal{B}_i) + w_i(\mathcal{D}_i, \reparamVar_i, \mathcal{B}_i)
\end{align*}
In the remainder of this section, we prove that the $w_i(\mathcal{B}_i)$ term can be safely removed. Specifically, we prove the following:
\begin{theorem}
For a discrete random choice $\reparamVar_i$:
\begin{equation*}
\expect_\reparamDist [ \gradparams \log \guide(\reparamXform_i(\reparamVar_i) | \reparamXform(\reparamVars_{<i}), \observedVars) w_i(\mathcal{B}_i) ] = 0
\end{equation*}
\label{thm:wterms}
\end{theorem}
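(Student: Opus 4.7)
My plan is to exploit the conditional-independence structure encoded by the dependency graph to reduce the expectation to an inner expectation of the form appearing in Lemma~\ref{lem:zeroexp}. The key observation is that $w_i(\mathcal{B}_i)$ depends only on variables that are \emph{not} downstream of $\reparamVar_i$, so conditioning on $\mathcal{B}_i$ and marginalizing $\reparamVar_i$ (with $\mathcal{D}_i$ marginalized away innermost) should produce the identity $\expect[\gradparams \log q_i] = 0$.

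The first step is to use the construction of the graph to establish a factorization of $\reparamDist(\reparamVars \mid \observedVars)$ adapted to the partition $\{\mathcal{B}_i, \{\reparamVar_i\}, \mathcal{D}_i\}$. Although the natural factorization is in program order, $\prod_j \reparamDist(\reparamVar_j \mid \reparamVars_{<j}, \observedVars)$, the graph guarantees that any $\reparamVar_j \in \mathcal{C}_i$ with $j > i$ has no edge from $\reparamVar_i$, hence its conditional does not actually depend on $\reparamVar_i$. Using this dataflow-Markov property I can reorder and write
\begin{equation*}
\reparamDist(\reparamVars \mid \observedVars) \;=\; \reparamDist(\mathcal{B}_i \mid \observedVars)\cdot \reparamDist(\reparamVar_i \mid \mathcal{U}_i, \observedVars)\cdot \reparamDist(\mathcal{D}_i \mid \reparamVar_i, \mathcal{B}_i, \observedVars).
\end{equation*}
Moreover, since $\reparamVar_i$ is discrete we have $\reparamDist_i = \guide_i$ and $\reparamXform_i = \text{id}$, so $\guide(\reparamXform_i(\reparamVar_i) \mid \reparamXform(\reparamVars_{<i}), \observedVars) = \reparamDist(\reparamVar_i \mid \mathcal{U}_i, \observedVars)$ (the guide distribution at $i$ depends only on the upstream variables by the same dataflow argument).

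With this factorization in hand, the second step is a reorganization of integrals:
\begin{equation*}
\expect_\reparamDist\!\bigl[\gradparams \log \guide(\reparamVar_i \mid \mathcal{U}_i, \observedVars)\, w_i(\mathcal{B}_i)\bigr]
= \int_{\mathcal{B}_i}\!\! \reparamDist(\mathcal{B}_i \mid \observedVars)\, w_i(\mathcal{B}_i) \!\int_{\reparamVar_i}\!\! \guide(\reparamVar_i \mid \mathcal{U}_i, \observedVars) \gradparams \log \guide(\reparamVar_i \mid \mathcal{U}_i, \observedVars) \!\int_{\mathcal{D}_i}\!\! \reparamDist(\mathcal{D}_i \mid \reparamVar_i, \mathcal{B}_i, \observedVars)\, d\mathcal{D}_i\, d\reparamVar_i\, d\mathcal{B}_i.
\end{equation*}
The innermost integral equals $1$ since $\reparamDist(\mathcal{D}_i \mid \reparamVar_i, \mathcal{B}_i, \observedVars)$ is a proper conditional distribution. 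The middle integral is exactly $\expect_{\guide(\cdot\mid \mathcal{U}_i,\observedVars)}[\gradparams \log \guide(\reparamVar_i \mid \mathcal{U}_i, \observedVars)] = 0$ by Lemma~\ref{lem:zeroexp}. Hence the whole expression vanishes.

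The main obstacle I anticipate is the first step: giving a clean justification of the graph-based factorization and, in particular, of why $\guide(\reparamXform_i(\reparamVar_i) \mid \reparamXform(\reparamVars_{<i}), \observedVars)$ is measurable with respect to $\mathcal{U}_i$ rather than the full prefix $\reparamVars_{<i}$. This requires carefully relating the graph-reachability definitions of $\mathcal{U}_i, \mathcal{D}_i, \mathcal{C}_i$ to the program-order conditional-independence structure of $\reparamDist$, which in turn rests on the conservativeness claim about the graph made in the construction (any dataflow path in the program corresponds to a graph path, so absence of a graph edge implies absence of dataflow dependency). Once this is stated precisely the rest of the argument is a routine application of Fubini and Lemma~\ref{lem:zeroexp}.
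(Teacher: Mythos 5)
Your proposal is correct and follows essentially the same route as the paper's proof: the same partition of the graph into $\mathcal{B}_i$, $\{\reparamVar_i\}$, $\mathcal{D}_i$, the same factorization of $\reparamDist$, the same identification of the guide factor with $\guide(\reparamXform_i(\reparamVar_i)\mid\mathcal{U}_i)$ via discreteness and the conditional-independence structure (the paper isolates this as Lemma~\ref{lem:bIndependentFromU} and the disjointness fact $\reparamVars_{<i}\cap\mathcal{D}_i=\varnothing$ as Lemma~\ref{lem:depsDisjoint}), and the same final appeal to $\expect_f[\nabla\log f]=0$. The ``obstacle'' you flag is exactly what the paper handles with those two lemmas, so nothing further is missing.
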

In proving this theorem, the following two lemmas will be useful:
\begin{lemma}
\begin{equation*}
\reparamVars_{<i} \cap \mathcal{D}_i = \varnothing
\end{equation*}
\label{lem:depsDisjoint}
\end{lemma}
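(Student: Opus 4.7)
The plan is to derive this lemma as a straightforward corollary of the graph construction rules laid out immediately before the lemma. Specifically, the paper notes that ``the existence of a path $\reparamVar_a \to \reparamVar_b$ implies that $\reparamVar_b$ was sampled after $\reparamVar_a$ in the program execution order.'' Since $\mathcal{D}_i$ is by definition the set of nodes reachable from $\reparamVar_i$ in the dependency graph, any random-choice node in $\mathcal{D}_i$ must have been sampled strictly after $\reparamVar_i$. But $\reparamVars_{<i}$ consists of random choices sampled strictly before $\reparamVar_i$, so the two sets cannot overlap.

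The proof I would write has three short steps. First, assume for contradiction that some $\reparamVar_j \in \reparamVars_{<i} \cap \mathcal{D}_i$. Second, apply the definition of $\mathcal{D}_i$: there is a directed path $\reparamVar_i \to \cdots \to \reparamVar_j$. Third, invoke the temporal-ordering property of the dependency graph to conclude $j$ was sampled after $i$, contradicting $\reparamVar_j \in \reparamVars_{<i}$.

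The only step requiring care is justifying the temporal-ordering property itself, since the lemma depends on it rather than on anything deeper. I would verify it by a quick case analysis over the graph-construction rules: ordinary \texttt{sample}/\texttt{observe} edges go from the previously-sampled node to the current one, which is forward in time by construction; inside a \ic{mapData} iteration, edges descend from \ic{split} through a chain sampled in execution order; and the fan-in into \ic{join} is added only after all iterations have completed. In every rule, an edge $a \to b$ is added only when $b$ is created after $a$, and this property is preserved under path composition. The reset of \ic{prev} to \ic{split} at each iteration start is what prevents cross-iteration edges, which is the only spot where one might worry about a ``backward'' edge; but since no edge is ever created there, none appears.

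The main (mild) obstacle is simply making explicit the invariant that is left implicit in the construction description, namely that \ic{prev} is always a node already in the graph at the time a new node is attached to it. Once this invariant is stated, the ordering property, and hence the lemma, follow immediately.
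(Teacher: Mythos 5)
Your proof is correct and follows essentially the same route as the paper's: both arguments combine the definition of $\mathcal{D}_i$ (reachability from $\reparamVar_i$) with the construction-time ordering property that an edge, and hence a path, $\reparamVar_a \to \reparamVar_b$ implies $\reparamVar_b$ was created after $\reparamVar_a$, contradicting membership in $\reparamVars_{<i}$. Your additional case analysis verifying the ordering invariant is a reasonable elaboration of what the paper asserts with ``as noted above, by construction,'' but it does not change the substance of the argument.
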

\begin{proof}
By definition, $\reparamVars_{<i}$ are all random choices which occur before $\reparamVar_i$ in program execution order. Also by definition, for all $d \in \mathcal{D}_i$, there exists a path $\reparamVar_i \to d$. As noted above, by construction, this implies that $d$ was created after $\reparamVar_i$ in program execution order. Thus $d$ cannot be in $\reparamVars_{<i}$.
\end{proof}
\begin{lemma}
\begin{equation*}
\guide(\reparamXform_i(\reparamVar_i) | \mathcal{B}_i) \equiv \guide(\reparamXform_i(\reparamVar_i) | \mathcal{U}_i \cup \mathcal{C}_i) = \guide(\reparamXform_i(\reparamVar_i) | \mathcal{U}_i)
\end{equation*}
\label{lem:bIndependentFromU}
\end{lemma}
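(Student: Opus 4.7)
The plan is to reduce the claim to the conditional independence statement $\reparamVar_i \perp \mathcal{C}_i \mid \mathcal{U}_i$ under the guide's joint distribution, and then to derive that independence from the construction of the dependency graph via a standard local Markov argument.

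First I would note that the guide's joint distribution factors according to program execution order as $\guide(\latentVars | \observedVars) = \prod_j \guide(\reparamXform_j(\reparamVar_j) | \reparamXform(\reparamVars_{<j}), \observedVars)$, and that by construction of the dependency graph each factor's parameters are computed from program state which can only depend on values produced by ancestors of that node in the graph. For a non-\ic{mapData} program the graph is a linear chain of sample/observe nodes and $\mathcal{C}_i$ is empty so the lemma is vacuous; the interesting case is \ic{mapData}, where the \ic{split}/\ic{join} nodes partition the graph into parallel iteration subgraphs. The semantics of \ic{mapData} guarantee that the computation inside one iteration does not see the random choices made inside any other iteration, so the parameters driving $\guide(\reparamXform_i(\reparamVar_i) | \cdot)$ are functions only of values flowing along paths ending at $\reparamVar_i$, i.e.\ of values in $\mathcal{U}_i$.

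Next I would invoke the local Markov property: in the Bayesian network obtained by directing edges from parents to children in the dependency graph, any node is conditionally independent of its non-descendants given its parents. Since $\mathcal{U}_i$ contains all ancestors of $\reparamVar_i$ (in particular the parents) and every $c \in \mathcal{C}_i$ is by definition a non-descendant of $\reparamVar_i$ that is not in $\mathcal{U}_i$, we get $\reparamVar_i \perp \mathcal{C}_i \mid \mathcal{U}_i$. Substituting into the definition of conditional probability gives
\begin{equation*}
\guide(\reparamXform_i(\reparamVar_i) \mid \mathcal{U}_i \cup \mathcal{C}_i)
= \guide(\reparamXform_i(\reparamVar_i) \mid \mathcal{U}_i),
\end{equation*}
which is the claim.

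The main obstacle I anticipate is the justification that the constructed graph is a valid I-map for the guide's joint distribution, and specifically that the \ic{mapData} \ic{split}/\ic{join} structure really does block dataflow between parallel iterations. This is a statement about the operational semantics of \ic{mapData} (each iteration receives only the shared \ic{split} state plus its own datum) rather than a purely graph-theoretic fact; once that is in place, the local Markov step is routine. A secondary, minor subtlety is that the graph is a conservative overapproximation of dataflow, which only adds edges and hence can only add ancestors to $\mathcal{U}_i$; this strengthens, rather than weakens, the independence we need, so it poses no real difficulty.
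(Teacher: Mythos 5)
Your proposal is correct and takes essentially the same route as the paper: both reduce the claim to the conditional independence $\reparamVar_i \perp \mathcal{C}_i \mid \mathcal{U}_i$ and then derive it from the structure of the dependency graph, finishing with the definition of conditional probability. The only difference is cosmetic --- you invoke the directed local Markov property (node independent of non-descendants given parents, then weak union to enlarge the conditioning set to $\mathcal{U}_i$), whereas the paper argues slightly more informally that there is no directed path between $\reparamVar_i$ and $\mathcal{C}_i$ in either direction and all their common ancestors lie in $\mathcal{U}_i$; your explicit attention to why the conservative graph is a valid I-map (the \ic{mapData} split/join semantics) is a point the paper takes for granted.
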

\begin{proof}
By construction there is no directed path $\reparamVar_i \to \mathcal{C}_i$, or vice versa.  Further, all nodes that are upstream of both $\reparamVar_i$ and $\mathcal{C}_i$ are in $\mathcal{U}_i$ (because all nodes that are upstream of $\reparamVar_i$ are in $\mathcal{U}_i$ by definition). It follows that these sets are conditionally independent:
\begin{equation*}
\guide(\reparamXform_i(\reparamVar_i), \mathcal{C}_i | \mathcal{U}_i) = \guide(\reparamXform_i(\reparamVar_i)| \mathcal{U}_i) \guide(\mathcal{C}_i | \mathcal{U}_i).
\end{equation*}
Using this, the result is immediate:
\begin{equation*}
\guide(\reparamXform_i(\reparamVar_i) | \mathcal{U}_i \cup \mathcal{C}_i) 
= \frac{\guide(\reparamXform_i(\reparamVar_i), \mathcal{C}_i | \mathcal{U}_i)}{\guide(\mathcal{C}_i | \mathcal{U}_i)} 
= \guide(\reparamXform_i(\reparamVar_i)| \mathcal{U}_i).
\end{equation*}
\end{proof}
A corollary of this lemma is that $\guide(\reparamXform_i(\reparamVar_i) | \reparamXform(\reparamVars_{<i}), \observedVars) = \guide(\reparamXform_i(\reparamVar_i) | \mathcal{U}_i)$, since $\reparamVars_{<i}$ must be a subset of $\mathcal{B}_i$.
We can now prove Theorem~\ref{thm:wterms}:
\begin{align}
\expect_\reparamDist [ \gradparams \log \guide(\reparamXform_i(\reparamVar_i) | \reparamXform(\reparamVars_{<i}), \observedVars) w_i(\mathcal{B}_i) ]
&= \int_\reparamVars \reparamDist(\reparamVars | \observedVars) \gradparams \log \guide(\reparamXform_i(\reparamVar_i) | \reparamXform(\reparamVars_{<i}), \observedVars) w_i(\mathcal{B}_i)\nonumber\\
&= \int_{\mathcal{B}_i} \reparamDist(\mathcal{B}_i) w_i(\mathcal{B}_i) \sum_{\reparamVar_i} \reparamDist(\reparamVar_i | \mathcal{B}_i) \int_{\mathcal{D}_i} \reparamDist(\mathcal{D}_i | \reparamVar_i, \mathcal{B}_i) \gradparams \log \guide(\reparamXform_i(\reparamVar_i) | \reparamXform(\reparamVars_{<i}), \observedVars)\nonumber\\
\label{eq:wterms_depsDisjoint}
&= \int_{\mathcal{B}_i} \reparamDist(\mathcal{B}_i) w_i(\mathcal{B}_i) \sum_{\reparamVar_i} \reparamDist(\reparamVar_i | \mathcal{B}_i) \gradparams \log \guide(\reparamXform_i(\reparamVar_i) | \reparamXform(\reparamVars_{<i}), \observedVars) \int_{\mathcal{D}_i} \reparamDist(\mathcal{D}_i | \reparamVar_i, \mathcal{B}_i)\\
\label{eq:wterms_isDiscrete}
&= \int_{\mathcal{B}_i} \reparamDist(\mathcal{B}_i) w_i(\mathcal{B}_i) \sum_{\reparamVar_i} \guide(\reparamXform_i(\reparamVar_i) | \mathcal{B}_i) \gradparams\log \guide(\reparamXform_i(\reparamVar_i) | \reparamXform(\reparamVars_{<i}), \observedVars) \cdot 1\\
\label{eq:wterms_bIndependentFromU}
&= \int_{\mathcal{B}_i} \reparamDist(\mathcal{B}_i) w_i(\mathcal{B}_i) \sum_{\reparamVar_i} \guide(\reparamXform_i(\reparamVar_i) | \mathcal{U}_i) \gradparams \log \guide(\reparamXform_i(\reparamVar_i) | \mathcal{U}_i)\\
&= \int_{\mathcal{B}_i} \reparamDist(\mathcal{B}_i) w_i(\mathcal{B}_i) \expect_{\guide} [ \gradparams \log \guide(\reparamXform_i(\reparamVar_i) | \mathcal{U}_i) ]\nonumber\\
\label{eq:wterms_zeroexp}
&= \int_{\mathcal{B}_i} \reparamDist(\mathcal{B}_i) w_i(\mathcal{B}_i) \cdot 0 = 0
\end{align}
Line~\ref{eq:wterms_depsDisjoint} uses Lemma~\ref{lem:depsDisjoint}.
Line~\ref{eq:wterms_isDiscrete} uses the fact that $\reparamVar_i$ is discrete.
Line~\ref{eq:wterms_bIndependentFromU} uses Lemma~\ref{lem:bIndependentFromU} and its corollary.
Finally, line~\ref{eq:wterms_zeroexp} uses Lemma~\ref{lem:zeroexp}. 

We should note that similar methods to our removal of terms from $W$ have been used by prior work to reduce variance in LR gradient estimators. Rao-blackwellization, using the Markov blanket of a node in a graphical model, produces a similar estimator~\cite{BBVI}. In deep generative models, a similar technique allows the derivation of lower-variance estimators for each layer of the deep generative network~\cite{NVIL}. Finally, less conservative (i.e. exact) dependency graphs can be used to reduce gradient variance in stochastic computation graphs~\cite{StochasticComputationGraphs}.

\subsection{Variance Reduction Step 2: Baselines}

Next, we prove that subtracting a constant baseline term $b_i$ from every $w_i$ does not change the expectation in Equation~\ref{eq:finalEstimator}:
\begin{align*}
\expect_\reparamDist [ \gradparams \log \guide(\reparamXform_i(\reparamVar_i) | \reparamXform(\reparamVars_{<i})) (w_i(\reparamVars, \observedVars) - b_i) ]
&= \expect_\reparamDist [ \gradparams \log \guide(\reparamXform_i(\reparamVar_i) | \reparamXform(\reparamVars_{<i})) w_i(\reparamVars, \observedVars) ] - \expect_\reparamDist[ \gradparams \log \guide(\reparamXform_i(\reparamVar_i) | \reparamXform(\reparamVars_{<i})) b_i ]\\
&= \expect_\reparamDist [ \gradparams \log \guide(\reparamXform_i(\reparamVar_i) | \reparamXform(\reparamVars_{<i})) w_i(\reparamVars, \observedVars) ] 
\end{align*}
Where the last step makes use of Lemma~\ref{lem:zeroexp2}.

In our system, we use $b_i = \expect[ w_i ]$, which we estimate with of a moving average of the samples used to compute gradients. While this choice of $b_i$ is not guaranteed to reduce variance, it often does in practice, and previous systems for variational inference and reinforcement learning have exploited it~\cite{BBVI,StochasticComputationGraphs,VarianceReduction}. Another option is to \emph{learn} $b_i$, for example as a neural net function of $\observedVars$~\cite{NVIL}. The proof above also permits $b_i$ to be a function of $\reparamVars_{<i}$ (i.e. all previous random choices), which could reduce variance further by tracking posterior dependencies. This is a promising avenue for future work.

\subsection{Variance Reduction Step 3: Zero Expectation $q$ Factors}

Finally, we prove that we can remove any factors corresponding to discrete (i.e. non-reparameterized choices) from the $\gradparams \log \guide(\xformedVars | \observedVars)$ term in Equation~\ref{eq:hybridEstimator} without changing its expectation:
\begin{equation*}
\expect_\reparamDist [ \gradparams \log \guide(\reparamXform_i(\reparamVar_i) | \reparamXform_{<i}(\reparamVars_{<i}), \observedVars) ]
= \expect_\reparamDist [ \gradparams \log \reparamDist(\reparamVar_i | \reparamVars_{<i}, \observedVars) ]
= 0
\end{equation*}
where we have used Lemma~\ref{lem:zeroexp2} and the fact that $\reparamDist = \guide$ and $\reparamXform$ is the identity for discrete random choices.


\section{Appendix: Example Programs}
\label{sec:appendix_code}

\subsection{Gaussian mixture model}
\label{sec:appendix_code:gmmSumOut}

With discrete choices marginalized out:
\begin{lstlisting}
var obs = loadData('data.json');
var nComps = 3
var model = function() {
   var theta_x = simplex(modelParam({dims: [nComps-1, 1], name: 'theta_x'}));
   var params_y = [
      {mu: modelParam({name: 'mu1'}), sigma: softplus(modelParam({name: 's1'}))},
      {mu: modelParam({name: 'mu2'}), sigma: softplus(modelParam({name: 's2'}))},
      {mu: modelParam({name: 'mu3'}), sigma: softplus(modelParam({name: 's3'}))}
   ];
   mapData({data: obs}, function(y) {
      // Explicitly sum out latent mixture component
      var scores = mapIndexed(function(i, muSigma) {
         var w = T.get(theta_x, i);
         return Gaussian(muSigma).score(y) + Math.log(w);
      }, params_y);
      // Summed-out likelihod
      factor(logsumexp(scores));
   });
};
\end{lstlisting}

\subsection{QMR-DT}
\label{sec:appendix_code:qmr}

Program with joint guide:
\begin{lstlisting}
var graph = loadQMRGraph('qmr_graph.json');
var data = loadData('qmr_data.json');

var noisyOrProb = function(symptomNode, diseases) {
   var cp = product(map(function(parent) {
      return diseases[parent.index] ? (1 - parent.prob) : 1;
   }, symptomNode.parents));
   return 1 - (1-symptomNode.leakProb)*cp;
};

var guideNet = nn.mlp(graph.numSymptoms, [
   {nOut: graph.numDiseases, activation: sigmoid}
], 'guideNet');
var predictDiseaseProbs = function(symptoms) {
   return nneval(guideNet, Vector(symptoms));
};

var model = function() {
   mapData({data: data, batchSize: 20}, function(symptoms) {
      var predictedProbs = predictDiseaseProbs(symptoms);
      var diseases = mapIndexed(function(i, disease) {
         return sample(Bernoulli({p: disease.priorProb}), {
            guide: Bernoulli({p: T.get(predictedProbs, i)})
         });
      }, graph.diseaseNodes);

      mapData({data: symptoms}, function(symptom, symptomIndex) {
         var symptomNode = graph.symptomNodes[symptomIndex];
         observe(Bernoulli({p: noisyOrProb(symptomNode, diseases)}), symptom);
      });
   });
};
\end{lstlisting}

Program with factored guide. Note that the guide uses a separate neural network (with separate parameters) to predict each latent cause.
\begin{lstlisting}
var graph = loadQMRGraph('qmr_graph.json');
var data = loadData('qmr_data.json');

var noisyOrProb = function(symptomNode, diseases) {
   var cp = product(map(function(parent) {
      return diseases[parent.index] ? (1 - parent.prob) : 1;
   }, symptomNode.parents));
   return 1 - (1-symptomNode.leakProb)*cp;
};

var predictNet = cache(function(i) {
   return nn.mlp(graph.numSymptoms, [
      {nOut: 1, activation: sigmoid}
   ], 'predictNet_'+i);
});
var predictDiseaseProb = function(symptoms, i) {
   return T.get(nneval(predictNet(i), Vector(symptoms)), 0);
};

var model = function() {
   mapData({data: data, batchSize: 20}, function(symptoms) {
      var diseases = mapIndexed(function(i, disease) {
         return sample(Bernoulli({p: disease.priorProb}), {
            guide: Bernoulli({p: predictDiseaseProb(symptoms, i)})
         });
      }, graph.diseaseNodes);

      mapData({data: symptoms}, function(symptom, symptomIndex) {
         var symptomNode = graph.symptomNodes[symptomIndex];
         observe(Bernoulli({p: noisyOrProb(symptomNode, diseases)}), symptom);
      });
   });
};
\end{lstlisting}

Program with factored guide with GRU.
\begin{lstlisting}
var graph = loadQMRGraph('qmr_graph.json');
var data = loadData('qmr_data.json');
var gruHiddenDim = 20;

var noisyOrProb = function(symptomNode, diseases) {
   var cp = product(map(function(parent) {
      return diseases[parent.index] ? (1 - parent.prob) : 1;
   }, symptomNode.parents));
   return 1 - (1-symptomNode.leakProb)*cp;
};

var predictNet = cache(function(i) {
   return nn.mlp(graph.numSymptoms + gruHiddenDim, [
      {nOut: 1, activation: sigmoid}
   ], 'predictNet_'+i);
});
var predictDiseaseProb = function(symptoms, i) {
   var inputs = T.concat(Vector(symptoms), globalStore.gruHiddenState);
   return T.get(nneval(predictNet(i), inputs, 0);
};

// GRU cell: takes input + previous hidden state, produces new hidden state.
var gru = makeGRU(gruHiddenDim, 1, 'gru');
var updateGRUState = function(latentChoiceVal) {
   var inputs = T.concat(Vector([latentChoiceVal]), globalStore.gruHiddenState);
   globalStore.gruHiddenState = nneval(gru, inputs);
};

var model = function() {
   mapData({data: data, batchSize: 20}, function(symptoms) {
      globalStore.gruHiddenState = zeros([gruHiddenDim]);
      var diseases = mapIndexed(function(i, disease) {
         var x = sample(Bernoulli({p: disease.priorProb}), {
            guide: Bernoulli({p: predictDiseaseProb(symptoms, i)})
         });
         updateGRUState(x);
         return x;
      }, graph.diseaseNodes);

      mapData({data: symptoms}, function(symptom, symptomIndex) {
         var symptomNode = graph.symptomNodes[symptomIndex];
         observe(Bernoulli({p: noisyOrProb(symptomNode, diseases)}), symptom);
      });
   });
};
\end{lstlisting}

\subsection{LDA}
\label{sec:appendix_code:lda}

To simplify the programs in this section, we use the technique of Section~\ref{sec:autoGuide:meanField} to insert a mean field \ic{LogisticNormal} guide for any \ic{Dirichlet} random choice that does not have an explicit guide declared.

Mean field model:
\begin{lstlisting}
var model = function(corpus, vocabSize, numTopics, alpha, eta) {
  var topics = repeat(numTopics, function() {
    return sample(Dirichlet({alpha: eta}));
  });

  mapData({data: corpus}, function(doc) {

    var topicDist = sample(Dirichlet({alpha: alpha}));

    mapData({data: doc}, function(word) {
      var z = sample(Discrete({ps: topicDist}));
      var topic = topics[z];
      observe(Discrete({ps: topic}), word);
    });

  });

  return topics;
};
\end{lstlisting}

Marginalized mean field model:
\begin{lstlisting}
var model = function(corpus, vocabSize, numTopics, alpha, eta) {

  var topics = repeat(numTopics, function() {
    return sample(Dirichlet({alpha: eta}));
  });

  mapData({data: corpus}, function(doc) {

    var topicDist = sample(Dirichlet({alpha: alpha}));

    forEach(doc, function(count, word) {

      if (count > 0) {
        // Sum over topic assignments/z.
        var prob = sum(mapN(function(z) {
          var zScore = Discrete({ps: topicDist}).score(z);
          var wgivenzScore = Discrete({ps: topics[z]}).score(word);
          return Math.exp(zScore + wgivenzScore);
        }, numTopics));

        factor(Math.log(prob) * count);
      }

    });

  });

  return topics;

};
\end{lstlisting}

Word-level guide:
\begin{lstlisting}
var model = function(corpus, vocabSize, numTopics, alpha, eta) {

  var numHid = 50;
  var embedSize = 50;
  var embedNet = nn.mlp(vocabSize, [{nOut: embedSize, activation: nn.tanh}], 'embedNet');

  var net = nn.mlp(embedSize + numTopics, [
    {nOut: numHid, activation: nn.tanh},
    {nOut: numTopics}
  ], 'net');

  var wordAndTopicDistToParams = function(word, topicDist) {
    var embedding = nneval(embedNet, oneOfK(word, vocabSize));
    var out = nneval(net, T.concat(embedding, T.sub(topicDist, 1)));
    return {ps: softplus(tensorToVector(out))};
  };

  var topics = repeat(numTopics, function() {
    return sample(Dirichlet({alpha: eta}));
  });

  mapData({data: corpus}, function(doc) {

    var topicDist = sample(Dirichlet({alpha: alpha}));

    mapData({data: doc}, function(word) {
      var z = sample(Discrete({ps: topicDist}), {
        guide: Discrete(wordAndTopicDistToParams(word, topicDist))
      });
      var topic = topics[z];
      observe(Discrete({ps: topic}), word);
    });

  });

  return topics;
};
\end{lstlisting}

Document-level guide:
\begin{lstlisting}
var nets = cache(function(numHid, vocabSize, numTopics) {
  var init = nn.constantparams([numHid], 'init');

  var ru = makeRNN(numHid, vocabSize, 'ru');

  var outputHidden = nn.mlp(numHid, [
    {nOut: numHid, activation: nn.tanh}
  ], 'outputHidden');

  var outputMu = nn.mlp(numHid, [
    {nOut: numTopics - 1}
  ], 'outputMu');

  var outputSigma = nn.mlp(numHid, [
    {nOut: numTopics - 1}
  ], 'outputSigma');

  return {
    init: init,
    ru: ru,
    outputHidden: outputHidden,
    outputMu: outputMu,
    outputSigma: outputSigma
  };
});

var model = function(data, vocabSize, numTopics, alpha, eta) {
  var corpus = data.documentsAsCounts;
  var numHid = 20;
  var nets = nets(numHid, vocabSize, numTopics);

  var guideParams = function(topics, doc) {
    var initialState = nneval(nets.init);
    var state = reduce(function(x, prevState) {
      return nneval(nets.ru, [prevState, x]);
    }, initialState, topics.concat(normalize(Vector(doc))));
    var hidden = nneval(nets.outputHidden, state);
    var mu = tensorToVector(nneval(nets.outputMu, hidden));
    var sigma = tensorToVector(softplus(nneval(nets.outputSigma, hidden)));
    var params = {mu: mu, sigma: sigma};
    return params;
  };

  var topics = repeat(numTopics, function() {
    return sample(Dirichlet({alpha: eta}));
  });

  mapData({data: corpus}, function(doc) {

    var topicDist = sample(Dirichlet({alpha: alpha}), {
      guide: LogisticNormal(guideParams(topics, doc))
    });

    mapData({data: countsToIndices(doc)}, function(word) {
      var z = sample(Discrete({ps: topicDist}));
      var topic = topics[z];
      observe(Discrete({ps: topic}), word);
    });

  });

  return topics;
};
\end{lstlisting}

\subsection{Variational Autoencoder}
\label{sec:appendix_code:vae}

In this example and the one that follows, \ic{nnevalModel} evaluates a neural network while also placing an improper uniform prior over the network parameters. This allows neural networks to be used as part of learnable models.

\begin{lstlisting}
// Data
var data = loadData('mnist.json');
var dataDim = 28*28;
var hiddenDim = 500;
var latentDim = 20;

// Encoder
var encodeNet = nn.mlp(dataDim, [
   {nOut: hiddenDim, activation: nn.tanh}
], 'encodeNet');
var muNet = nn.linear(hiddenDim, latentDim, 'muNet');
var sigmaNet = nn.linear(hiddenDim, latentDim, 'sigmaNet');
var encode = function(image) {
   var h = nneval(encodeNet, image);
   return {
      mu: nneval(muNet, h),
      sigma: softplus(nneval(sigmaNet, h))
   };
};

// Decoder
var decodeNet = nn.mlp(latentDim, [
   {nOut: hiddenDim, activation: nn.tanh},
   {nOut: dataDim, activation: nn.sigmoid}
], 'decodeNet');
var decode = function(latent) {
   return nnevalModel(decodeNet, latent);
};

// Training model
var model = function() {
   mapData({data: data, batchSize: 100}, function(image) {
      // Sample latent code (guided by encoder)
      var latent = sample(TensorGaussian({mu: 0, sigma: 1, dims: [latentDim, 1]}), {
         guide: DiagCovGaussian(encode(image))
      });

      // Decode latent code, observe binary image
      var probs = decode(latent);
      observe(MultivariateBernoulli({ps: probs}), image);
   });
}
\end{lstlisting}

\subsection{Sigmoid Belief Network}
\label{sec:appendix_code:sbn}

\begin{lstlisting}
// Data
var data = loadData('mnist.json');
var dataDim = 28*28;
var latentDim = 200;

// Encoder
var encodeNet = nn.mlp(dataDim, [
   {nOut: latentDim, activation: nn.sigmoid}
], 'encodeNet');
var encode = function(image) {
   return nneval(encodeNet, image)
};

// Decoder
var decodeNet = nn.mlp(latentDim, [
   {nOut: dataDim, activation: nn.sigmoid}
], 'decodeNet');
var decode = function(latent) {
   return nnevalModel(decodeNet, latent);
};

// Training model
var priorProbs = Vector(repeat(latentDim, function() { return 0.5; }));
var model = function() {
   mapData({data: data, batchSize: 100}, function(image) {
      // Sample latent code (guided by encoder)
      var latent = sample(MultivariateBernoulli({ps: priorProbs}), {
         guide: MultivariateBernoulli({ps: encode(image)})
      });

      // Decode latent code, observe binary image
      var probs = decode(latent);
      observe(MultivariateBernoulli({ps: probs}), image);
   });
}
\end{lstlisting}

\end{document}